\documentclass{article}

\PassOptionsToPackage{numbers, compress, sort}{natbib}
\usepackage[main, final]{neurips_2026}
\usepackage[pagebackref=true,colorlinks,citecolor=brown]{hyperref}
\usepackage[font=small]{caption}

\usepackage[utf8]{inputenc} % allow utf-8 input
\usepackage[T1]{fontenc}    % use 8-bit T1 fonts
\usepackage{hyperref}       % hyperlinks
\usepackage{url}            % simple URL typesetting
\usepackage{booktabs}       % professional-quality tables
\usepackage{amsfonts}       % blackboard math symbols
\usepackage{nicefrac}       % compact symbols for 1/2, etc.
\usepackage{microtype}      % microtypography
\usepackage{amsmath, amssymb, amsthm}         % math
\usepackage{xcolor}         % colors
\usepackage{graphicx}         
\usepackage{comment}   
\usepackage{multirow, multicol}
\usepackage{makecell}
\usepackage{subcaption}
\usepackage{tikz}
\usepackage{fontawesome}

\renewcommand{\paragraph}[1]{\noindent\textbf{#1}}

\usepackage{algorithm}
\usepackage{algpseudocode}
\usepackage{array}

\newcommand{\lkey}[1]{\textcolor{#1}{\rule[0.5ex]{1.0em}{1.5pt}}\,}
\definecolor{c1}{HTML}{5DBDF3}
\definecolor{c2}{HTML}{FAB84E}
\definecolor{c3}{HTML}{009E73}
\definecolor{c4}{HTML}{D55E00}
\definecolor{c5}{HTML}{F0E442}
\definecolor{c6}{HTML}{76B7B2}
\definecolor{c7}{HTML}{8C564B}
\definecolor{c8}{HTML}{6A3D9A}

\usepackage[framemethod=TikZ]{mdframed}
\usepackage{amsthm}
\usepackage[dvipsnames]{xcolor}

\definecolor{propbackground}{HTML}{EDF2FE}
\definecolor{propborder}{HTML}{2A3487}

\newmdtheoremenv[
  backgroundcolor=propbackground,  % #8170B2
  linecolor=propborder,                 % black border
  linewidth=0.5pt,
  roundcorner=4pt,                 % more rounded corners
  innertopmargin=6pt,
  innerbottommargin=6pt,
  innerleftmargin=8pt,
  innerrightmargin=8pt
]{theorem}{Theorem}[section]      % env name, printed label, numbering restarts each section

\definecolor{algcomment}{gray}{0.45}

\newcommand{\codeurl}{https://github.com/haldunbalim/GTPlanning}

\title{Model-Based Online Decision Making via Generative Trajectory Planning}

\author{%
Haldun Balim$^{1}$
\quad Yilun Du$^{1 \dagger}$
\quad Na Li$^{1 \dagger}$\\
$^1$Harvard University\\
\texttt{\{hbalim@fas, ydu@seas, nali@seas\}.harvard.edu}\\[0.5em]
\href{https://haldunbalim.github.io/GTPlanning}{\faGlobe\ Website}
\quad
\href{https://github.com/haldunbalim/GTPlanning}{\faGithub\ Code}
}

\begin{document}

\maketitle

\begingroup
\renewcommand{\thefootnote}{}
\footnotetext{$\dagger$ denotes equal advising.}
\endgroup

\begin{abstract}
Online decision-making faces a tension between efficient inference and long-horizon reasoning. Value-based methods rely on short-horizon bootstrapping that limits the propagation of long-term information, while trajectory optimization methods plan explicitly but must solve a high-dimensional optimization problem at every decision step. We propose Generative Trajectory Planning (GTP), a model-based framework that addresses this tension by learning a generative model as a proposal distribution over action sequences. Instead of optimizing sequences from scratch, GTP samples candidate sequences from a diffusion prior and refines them using learned dynamics, reward, and value models, enabling efficient trajectory-level reasoning. Across continuous control and long-horizon manipulation tasks, GTP matches or exceeds state-of-the-art model-free, diffusion-policy, and model-based baselines, and remains stable in regimes where strong baselines degrade. Our results suggest that generative models are best understood not as policies, but as structured proposal distributions for planning, offering a new perspective on how to integrate generative modeling with model-based decision-making.
\end{abstract}

\vspace{-15pt}
\section{Introduction}
\vspace{-5pt}

Reinforcement learning (RL) has emerged as the dominant framework for online decision-making~\citep{sutton2018reinforcement, silver2016mastering}, with strong empirical success across domains such as robotics and continuous control~\citep{kober2013reinforcement, levine2016end, lillicrap2015continuous, haarnoja2018soft}. At its core, RL compresses future outcomes into a scalar value function and optimizes single-step policies against this objective~\citep{sutton1988learning}. While effective when reward signals are informative, this formulation exhibits structural limitations in long-horizon settings: reliance on short-horizon bootstrapping slows the propagation of long-term information and accumulates estimation errors~\citep{mnih2015human, schulman2015high}, while exploration driven by stochasticity at the level of individual actions~\citep{lillicrap2015continuous, haarnoja2018soft} limits the discovery of coherent behaviors. Model-based RL aims to mitigate these issues by incorporating learned predictive models into policy and value learning~\citep{deisenroth2011pilco, hafner2023mastering, Schrittwieser2020, janner2019trust}. In practice, however, these models are typically used indirectly, to shape value targets or generate synthetic data, rather than to enable explicit reasoning over a long-horizon. As a result, decision making remains driven by local updates, and the core limitations of short-horizon bootstrapping and action-level exploration persist.

In contrast, trajectory optimization (TO) explicitly optimizes action sequences, allowing long-horizon planning without relying on bootstrapped value functions~\citep{williams2017information, rubinstein1999cross}. However, optimizing high-dimensional, non-convex objectives at every step is computationally expensive and susceptible to local minima. Moreover, in online settings, planning relies on learned dynamics~\citep{chua2018deep, du2021modelbasedplanningenergy}, where prediction errors accumulate over long horizons and can be exploited by the optimizer~\citep{talvitie2014model, janner2019trust}. As a result, TO occupies the opposite end of the design spectrum from RL, as it enables explicit trajectory-level reasoning but is computationally intensive and degrades under model error.

A promising direction from generative modeling reframes planning as conditional trajectory generation~\citep{janner2022planning, ajay2023conditional, chi2023diffusion, balim2025model}: models learn distributions over high-return trajectories and sample candidates directly, achieving strong offline performance with large datasets~\citep{janner2022planning, ajay2023conditional}. In online settings, however, progress is limited as prior work largely treats generative models as policy replacements trained with standard RL objectives~\citep{ma2025efficient, wang2024diffusion, psenka2023learning, yang2023policy}, inheriting their shortcomings. As a result, the long-horizon and expressive benefits of generative planning remain underutilized online.

%\Lina{quickly adding a phrase to explain what online setting you were referring to or actually we should add a short phrase for ``online'' in the first paragraph.}\HB{Addressed}

\begin{figure}[t]
    \centering
    \includegraphics[width=\linewidth]{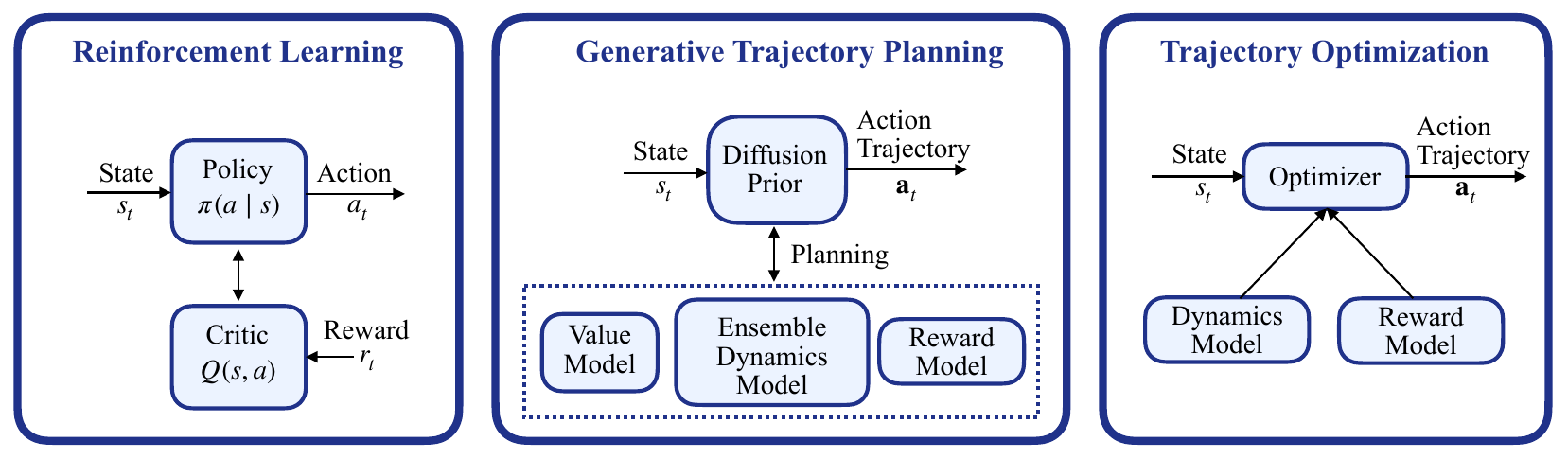}
    \vspace{-15pt}
    \caption{\textbf{Generative Trajectory Planning.} GTP combines the data-driven nature of RL with the long-horizon reasoning of trajectory optimization. A learned diffusion prior proposes candidate action sequences, which are evaluated and iteratively refined through learned models. By using a generative prior, GTP avoids computational cost of optimization while preserving long-horizon reasoning.}
    \label{fig:teaser}
    \vspace{-15pt}
\end{figure}

Taken together, these observations suggest that online decision making lacks a framework that combines efficient inference with explicit trajectory-level reasoning. We address this gap by casting decision making as generative inference, where a learned generative model defines a \textit{proposal distribution} over plausible action sequences. At each timestep, candidates are sampled from this distribution rather than optimized from scratch. These candidates are then refined using learned value and dynamics models, amortizing long-horizon reasoning into the generative model. To enable effective exploration, we characterize trajectory-level uncertainty using ensemble dynamics and act optimistically upon these estimates. The resulting framework GTP offers, a unified approach to integrate generative modeling with model-based decision making. 

In summary, our contributions are (1) we propose Generative Trajectory Planning (GTP), a model-based online decision-making framework that uses a generative model as a proposal distribution over action sequences, enabling trajectory-level reasoning without per-step optimization; (2) we develop an iterative planning procedure that interleaves beam search with diffusion-based refinement, progressively improving sampled action sequences under the learned prior and dynamics; (3) we show that GTP outperforms classical RL, diffusion-policy, and model-based RL baselines on locomotion benchmarks, improves sample efficiency on long-horizon manipulation, and scales to visual manipulation through latent representations, where standard RL methods fail to make progress.

\vspace{-10pt}
\section{Preliminaries}\label{sec:motivation}
\vspace{-5pt}
\subsection{Online Decision Making}
\vspace{-6pt}

\paragraph{Problem formulation.} We consider an infinite-horizon Markov Decision Process~\citep{mdp-book} (MDP) defined by the tuple $(\mathcal{S}, \mathcal{A}, \mathcal{P}, r, \gamma, \mu_0)$, where $\mathcal{S} \subseteq \mathbb{R}^n$ and $\mathcal{A} \subseteq \mathbb{R}^m$ are continuous state and action spaces, $\mathcal{P} : \mathcal{S} \times \mathcal{A} \to \Delta(\mathcal{S})$ is the transition function, $r : \mathcal{S} \times \mathcal{A} \to \mathbb{R}$ is the reward function, $\gamma \in [0, 1)$ is a discount factor, and $\mu_0$ is the initial state distribution. At each timestep $t$, the agent observes state $s_t$, takes an action $a_t$, receives reward $r_t = r(s_t, a_t)$, and transitions to $s_{t+1}\sim \mathcal{P}(\cdot | s_t, a_t)$. The goal is to learn a decision rule that maximizes the expected discounted return, $\mathbb{E}\left[\sum_{t=0}^{\infty} \gamma^t r_t\right]$, from data collected through \textit{online} interaction without knowing the transition model $\mathcal{P}$ and reward model $r$. 

%\vspace{-5pt}
\paragraph{Model predictive control.}
Given access to the dynamics $\mathcal{P}$ and reward $r$, MPC casts decision-making as an optimization problem solved at each timestep from the current state $s_t$:
\begin{equation}\label{eq:mpc}
\mathbf{a}_{t}^\star = \arg\max_{\mathbf{a}_{t}} \; \mathbb{E}\!\left[\sum_{h=0}^{H-1} \gamma^h r(s_{t+h}, a_{t+h}) + \gamma^H V(s_{t+H})\right],
\end{equation}
where $\mathbf{a}_t \triangleq (a_t, \ldots, a_{t+H-1})$ is an action sequence of horizon $H$, the expectation is taken under $\mathcal{P}$, and $V$ is an optional terminal value bootstrapping returns beyond the horizon. The first action $a_t^\star$ is executed and the procedure repeats. A common class of approaches for solving~\eqref{eq:mpc} uses sampling-based methods~\citep{rubinstein1999cross,williams2017information} that iteratively refine a parametric proposal toward the objective.

In the online setting, $\mathcal{P}$ and $r$ are unknown and must be replaced with learned dynamics $f_\phi$ and reward $r_\xi$; we similarly replace the terminal value by a learned $V_\psi$. Substituting these elements into~\eqref{eq:mpc} yields the planning objective the agent actually optimizes at each step. Because all three models are trained on the agent's own collected transitions, effective planning must balance exploiting current predictions against exploring regions where $f_\phi$ is uncertain; we drive this trade-off through sequence-level uncertainty over the learned dynamics in the planning objective.

Even with accurate models, the planner’s proposal remains a bottleneck: simple parametric families (e.g., Gaussians) cannot capture the multimodal, temporally correlated structure of high-return action sequences. We replace this with a learned generative model that concentrates on high-return trajectories and is reused across steps, amortizing planning. At decision time, candidates are sampled from this prior and refined using $f_\phi$, $r_\xi$, and $V_\psi$. We instantiate the prior with a diffusion model, enabling flexible modeling of high-dimensional, multimodal action distributions~\citep{janner2022planning, chi2023diffusion}.

\vspace{-7pt}
\subsection{Diffusion Models for Action Trajectory Generation}
\vspace{-8pt}
We use diffusion models to represent expressive distributions over
action sequences conditioned on the current state. Diffusion models
consist of a forward noising process $q$ that gradually corrupts
action sequences with Gaussian noise and a learned reverse process
$p_\theta$ that iteratively denoises them to generate new trajectories.
Given an action sequence $\mathbf{a}_t^0$, the forward noising kernel
at diffusion step $k$ is
$q_{k|0}(\mathbf{a}_t^k|\mathbf{a}_t^0)
=\mathcal{N}(\sqrt{\bar\alpha_k}\mathbf{a}_t^0,
(1-\bar\alpha_k)I)$,
where $\bar\alpha_k=\prod_{j=1}^k(1-\beta_j)$ is determined by a fixed schedule. Starting from Gaussian noise, the reverse process uses a learned noise predictor $\epsilon_\theta(\mathbf{a}_t^k,k,s_t)$
to generate an action sequences. The model is trained using a weighted
objective~\citep{ma2025efficient}:
\begin{equation}
    \mathcal{L}(\theta) =
    \mathbb{E}_{\mathbf{a}_t^0,k,\epsilon}
    \left[
    w(\mathbf{a}_t^0,s_t)
    \left\|\epsilon-\epsilon_\theta(\mathbf{a}_t^k,k,s_t)\right\|^2
    \right].
\end{equation}
The weights $w$ determine the target trajectory distribution. Uniform weights recover standard diffusion training, and non-uniform weights shift the target toward high-quality trajectories.
Unlike prior work that derives weights from
value functions~\citep{ma2025efficient,ding2024diffusion}, we use
trajectory-level scores (Sec.~\ref{sec:training}).

\begin{figure}[t]
    \centering
    \includegraphics[width=\linewidth]{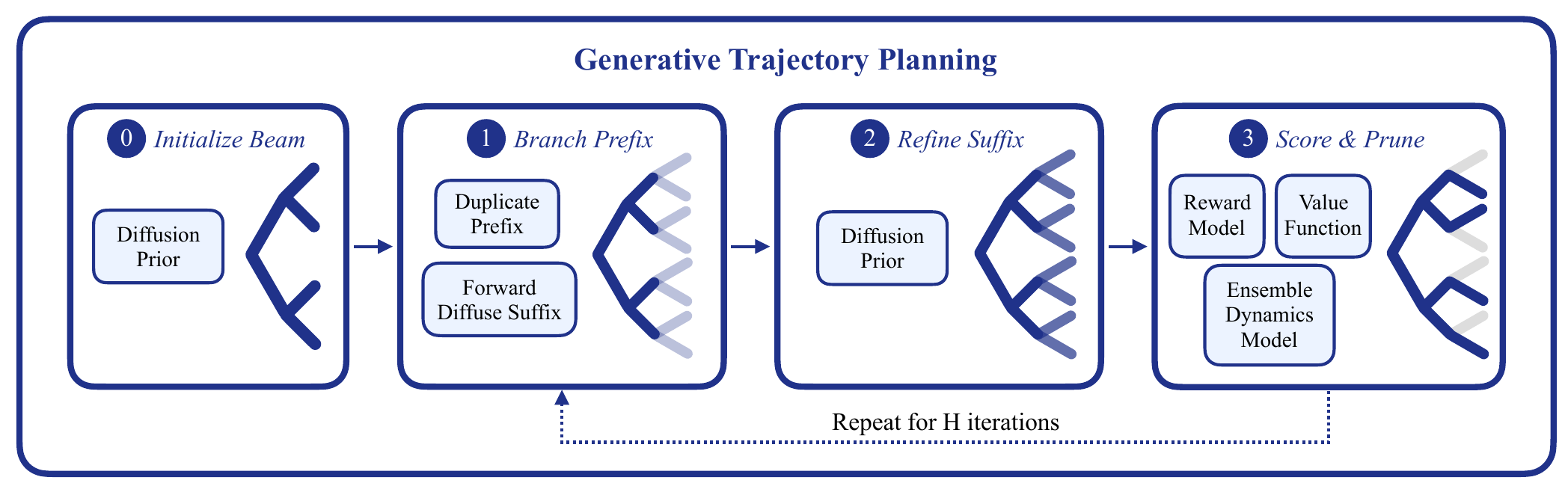}
    \vspace{-18pt}
    \caption{\textbf{Generative Trajectory Planning (GTP).}
    Planning proceeds via iterative beam search. Starting from an initial beam, GTP repeatedly \textit{(1) branches} candidate trajectory prefixes by duplicating, \textit{(2) refines} action suffixes through partial forward diffusion and denoising under the diffusion prior, and \textit{(3) prunes} candidates using rollouts through learned dynamics, reward, and value models. The top-scoring trajectories are retained and expanded at the next depth, progressively constructing long-horizon plans toward high-return regions.}
    \label{fig:framework}
\vspace{-15pt}
\end{figure}

\vspace{-12pt}
\section{Generative Trajectory Planning}
\vspace{-8pt}
Our framework casts decision making as planning over learned models. Given the current state $s_t$, candidate action sequences are sampled from a generative prior and evaluated using learned models to select high-return trajectories. Our method consists of four learned components. First, a \emph{generative action-sequence prior} $p_\theta(\mathbf{a}_t \mid s_t)$ defines a state-conditioned distribution over $H$-step action sequences and proposes candidates for planning. Second, a \emph{dynamics model} $f_\phi : \mathcal{S} \times \mathcal{A} \to \mathcal{S}$ predicts the next state, $\hat{s}_{h+1} = f_\phi(s_h, a_h)$, and is rolled out autoregressively to simulate candidate trajectories. We use an ensemble $\{f_{\phi_i}\}_{i=1}^{N}$ whose disagreement provides trajectory-level uncertainty for exploration. Third, a \emph{reward model} $r_\xi : \mathcal{S} \times \mathcal{A} \to \mathbb{R}$ predicts rewards $\hat{r}_h = r_\xi(s_h, a_h)$ along simulated trajectories. Finally, a \emph{value function} $V_\psi : \mathcal{S} \to \mathbb{R}$ estimates the infinite-horizon discounted return and provides the terminal bootstrap $V_\psi(s_{t+H})$. Together, the dynamics, reward, and value models evaluate and refine trajectories proposed by the generative prior through objective $J_H$.

%\Lina{Haldun, I do think we should mention the objector $J$ in this high level paragraph to tell people the key is 1) sample many action sequence trajectories using the trajectory generative model 2) improve and selectr actions based on $J$}

%Section~\ref{sec:inference} describes the planning procedure assuming these components are given: an iterative beam search that progressively commits prefix actions while refining the uncommitted suffix through repeated noising and denoising under the prior. Section~\ref{sec:online} then turns to the online setting, where the components must be learned from data the planner itself collects. We identify three challenges that arise from this coupling — slow value propagation under bootstrapping, mode collapse in the generative prior, and unreliable long-horizon planning under an inaccurate dynamics model —  which we address through targeted design choices. We additionally describe our exploration mechanism that leverages disagreement across the dynamics ensemble to drive the planner toward uncertain regions of the state space. Finally, Section~\ref{sec:training} presents the training objectives for each component.

Section~\ref{sec:inference} describes the planning procedure assuming these components are given, using beam search that progressively commits prefix actions while refining the remaining suffix via repeated noising and denoising under the prior. Section~\ref{sec:online} considers the online setting, where these components are learned from collected data, and addresses key challenges including slow value propagation under bootstrapping, mode collapse in the generative prior, and unreliable long-horizon planning under inaccurate dynamics. We also describe our exploration strategy based on disagreement across a dynamics ensemble. Finally, Section~\ref{sec:training} presents the training objectives for each component.

%\Lina{A major question hanging in my mind is: what is ultimate learning outcome? It would be more clear to first talk about the final outcome and how to use the final outcome for decision making. Then talk about how to learn them. I think you kind of follow this but inference (using the outcome to do decision making) and learning are still intervined together in the writing. It would be better to make the ``outcome'' more clear: you can do it in section 2. Section 3 will talk about how to learn. Section 2 is mainly introduce each element of your ``controller''} \HB{Rewrote to address this.}

\begin{figure}[t]
% \scalebox{0.73}{
\begin{minipage}{\linewidth}
\begin{algorithm}[H]
\caption{Generative Trajectory Planning with Beam Search}
\label{alg:gtp_inference}
\begin{algorithmic}[1]
\Require current state $s_t$, planning horizon $H$, beam size $B$, branching factor $D$, refinement steps $K_{\text{ref}}$, diffusion prior $p_\theta$, forward noising kernel $q_{K_{\text{ref}}\mid 0}$, objective functions $J_1, \ldots J_H$
\Statex \hrulefill
\Statex \textcolor{algcomment}{\textit{(0) Initialize beam:}}
\State Sample $B \times D$ candidate action sequences $\mathcal{C} \sim p_\theta(\cdot \mid s_t)$
\State $\mathcal{B}_0 \gets$ top-$B$ action sequences from $\mathcal{C}$ under $J_1(s_t, \cdot)$
\Statex \hrulefill
\For{$h=1$ to $H-1$}
    \Statex \textcolor{algcomment}{\textit{(1) Branch beam into candidate pool:}}
    \State Form $\mathcal{C}$ by replicating each $\mathbf{a} \in \mathcal{B}_{h-1}$ into $D$ copies, giving $|\mathcal{C}| = BD$
    \Statex \textcolor{algcomment}{\textit{(2) Refine future action sequences:}}
    \State Forward diffuse the candidate action sequences for $K_{\text{ref}}$ steps:
    $\mathcal{C}^{K_{\text{ref}}} \sim q_{K_{\text{ref}}\mid 0}\!\left(\cdot \mid \mathcal{C}\right)$
    \State Restore past actions $(t < h)$ with inpainting: $\mathcal{C}^{K_{\text{ref}}}_{[0:h)} \gets \mathcal{C}_{[0:h)}^{K_{\text{ref}}}$
    
    \For{$k=K_{\text{ref}}$ to $1$}
\State Reverse diffusion step:
$\mathcal{C}^{k-1} \gets p_\theta(\cdot \mid\mathcal{C}^{k}, s_t, k)$
    \State Restore past actions $(t< h)$ with inpainting: $\mathcal{C}^{k-1}_{[0:h)} \gets \mathcal{C}_{[0:h)}$
    \EndFor
    \Statex \textcolor{algcomment}{\textit{(3) Score and prune:}}
    \State $\mathcal{B}_h \gets$ top-$B$ candidates from $\mathcal{C}^0$ under $J_{h+1}(s_t, \cdot)$
\EndFor
\Statex \hrulefill
\State \Return $\arg\max_{\mathbf{a} \in \mathcal{B}_{H-1}} J_H(s_t, \mathbf{a})$
\end{algorithmic}
\end{algorithm}
\end{minipage}
\vspace{-10pt}
\end{figure}

\vspace{-5pt}
\subsection{Planning with Generative Prior}\label{sec:inference}
\vspace{-5pt}

Generating long-horizon trajectories is inherently challenging due to the curse of dimensionality: as the horizon grows, the space of action sequences expands exponentially, while high-reward sequences become vanishingly rare. Our generative prior restricts the search space to a lower-dimensional manifold of plausible trajectories; within this manifold, we apply an iterative search procedure that progressively refines candidates toward high-return regions using the learned models (see Fig.~\ref{fig:framework} for an overview). To make this concrete, we first define how the candidate sequences are scored.
% \vspace{-10pt}

\paragraph{Planning objective.} Given the current state $s_t$ and a candidate action prefix $\mathbf{a}_t = (a_t, \ldots, a_{t+h-1})$ of length $h \le H$, we score it by rolling the prefix forward through each dynamics model in the ensemble, accumulating discounted rewards along the way, and bootstrapping with the value function at the end of the prefix. For a single ensemble member $i$, this yields the $h$-step return
\begin{equation}
J_h^{(i)}(s_t, \mathbf{a}_t) = \sum_{h'=0}^{h-1} \gamma^{h'}\, r_\xi\!\left(\hat{s}^{(i)}_{t+h'}, a_{t+h'}\right) + \gamma^h\, V_\psi\!\left(\hat{s}^{(i)}_{t+h}\right),
\label{eq:per_dynamics_obj}
\end{equation}
where $\hat{s}^{(i)}_{t+h'+1}=f_{\phi_i}(\hat{s}^{(i)}_{t+h'}, a_{t+h'})$ is the rollout under the actions. We aggregate the per-dynamics estimates into a scalar via an aggregation rule $\mathrm{agg}$ (e.g., median, max; detailed in Sec.~\ref{sec:online}):
\begin{equation}
J^\mathrm{agg}_h(s_t, \mathbf{a}_t) = \mathrm{agg}_{i \in [N]} \, J_h^{(i)}(s_t, \mathbf{a}_t).
\label{eq:agg_obj}
\end{equation}

\paragraph{Beam-search procedure.}
We optimize the planning objective using beam search (Alg.~\ref{alg:gtp_inference}). We initialize candidate trajectories from the diffusion prior $p_\theta(\cdot\mid s_t)$ and retain the top $B$ under $J_1$. At each iteration, we commit one additional action, branch each candidate into $D$ variants, and refine their uncommitted suffixes through partial noising and denoising. The noise level $K_{\mathrm{ref}}$ controls the degree of refinement while preserving temporal coherence with the committed prefix. We then retain the top $B$ candidates under $J_{h+1}$. To balance local refinement and global exploration, we run multiple beams in parallel, allowing the planner to explore diverse modes of the generative prior.
\vspace{-10pt}
\subsection{Online Model Learning}
\vspace{-5pt}
\label{sec:online}
Section~\ref{sec:inference} described planning with fixed components. We now consider the online setting, where they are learned from data collected by the planner. We outline resulting challenges and the design choices that address them; the full loop is summarized in Algorithm~\ref{alg:gtp_training}.

\begin{figure}[t]
% \scalebox{0.73}{
\begin{minipage}{\linewidth}
\begin{algorithm}[H]
\caption{Online Training for Generative Trajectory Planning}
\label{alg:gtp_training}
\begin{algorithmic}[1]
\Require diffusion prior $p_\theta$, dynamics ensemble $\{f_{\phi_i}\}_{i=1}^N$, reward model $r_\xi$, value $V_\psi$ with target $V_{\bar\psi}$, temperature $\lambda$, target update rate $\tau$
\Statex \hrulefill
\State Initialize all parameters and replay buffer $\mathcal{D} \gets \emptyset$
\For{each environment step $t = 1, 2, \dots$}
    \Statex \textcolor{algcomment}{\textit{(1) Optimistic planning to act:}}
    \State Observe state $s_t$
    \State Execute $a^\star_t$ from $\mathbf{a}^\star_{t} \gets \text{Alg.~\ref{alg:gtp_inference}}(s_t;\, J_{1:H}^\mathrm{max})$, store observed transition $(r_t, s_{t+1}, d_t)$ to $\mathcal{D}$
    \Statex \textcolor{algcomment}{\textit{(2) Update dynamics and reward:}}
    \State Sample minibatch $(s, a, r, s_{t+1}) \sim \mathcal{D}$ and update $\{f_{\phi_i}\}$, $r_\xi$ via supervised learning (App.~\ref{app:networks-training})
    \Statex \textcolor{algcomment}{\textit{(3) Conservative planning to construct learning targets:}}
    \State For each $s$ in minibatch, plan top-$B$ trajectories $\{\mathbf{a}^{b}\}_{b=1}^B \gets$ Alg.~\ref{alg:gtp_inference}$(s;\, J_{1:H}^\mathrm{median})$
    \State Augment elite set $\{\mathbf{a}^{b}\}_{b=1}^B$ with Langevin samples (App.~\ref{app:langevin})
    \Statex \textcolor{algcomment}{\textit{(4) Update value and prior:}}
    \State Update $V_\psi$, $p_\theta$ as in Section~\ref{sec:training}, $\lambda$ as in App.~\ref{app:networks-training}; and target $V_{\bar\psi}$ by Polyak averaging.
\EndFor
\end{algorithmic}
\end{algorithm}
\end{minipage}
\vspace{-20pt}
\end{figure}

\paragraph{Mode collapse during prior training.}
Training our action prior in a sample-and-imitate loop is prone to collapse: as $p_\theta$ sharpens, the elite set drawn from its own samples differs little from what it already produces, so retraining reinforces the existing distribution rather than improving it. To inject novel high-scoring candidates, we perform a few steps of Langevin ascent on $J_H$ initialized from the planner's elites, perturbing them along the planning objective (cf. Appendix~\ref{app:langevin}). 

% \vspace{-10pt}
\paragraph{Exploration under learned dynamics.}
Unlike uncoordinated exploration through action noise, GTP leverages its dynamics ensemble to guide exploration over entire trajectories. Each ensemble member $f_{\phi_i}$ independently evaluates candidate sequences, with disagreement among predicted returns $\{J_h^{(i)}\}_{i=1}^N$ capturing trajectory-level uncertainty. When acting, we use \emph{max} aggregation in Eq.~\ref{eq:agg_obj}, favoring trajectories predicted to achieve high returns under at least one ensemble member. This induces optimism under uncertainty and encourages structured, temporally coherent exploration.
% \vspace{-10pt}

\paragraph{Slow value propagation in long horizons.}
Standard temporal-difference learning propagates rewards slowly over long horizons. Instead, we leverage multi-step return estimates from beam search, weighting different horizons by inverse ensemble variance to balance short-horizon bias against long-horizon uncertainty. We use \emph{median} aggregation across ensemble members for robust value targets, in contrast to the optimistic \emph{max} aggregation used for exploration. This allows GTP to learn from reliable trajectory-level estimates while encouraging exploration through optimistic planning (Sec.~\ref{sec:training}).

\paragraph{Computational efficiency.}
Although GTP incurs additional computational overhead from planning, this cost can be substantially reduced during both training and inference (cf. Appendix~\ref{app:timing}). During training, reusing planner-generated elite trajectories across multiple gradient updates reduces update costs without sacrificing performance (cf. Appendix~\ref{app:bootstrap}). At inference, the learned generative prior can generate actions without explicit planning, while executing multiple actions between replanning steps further reduces computational overhead (cf. Appendix~\ref{app:efficient_inference}). These strategies enable more efficient training and deployment with minimal performance degradation.

\vspace{-8pt}
\subsection{Training Objectives}\label{sec:training}
\vspace{-8pt}

All learned components are trained jointly from a shared replay buffer $\mathcal{D}$. The dynamics ensemble $\{f_{\phi_i}\}_{i=1}^N$ and reward model $r_\xi$ are trained by supervised regression, with each ensemble member receiving an independent bootstrap resample of the minibatch to encourage diversity. A termination head trained jointly with the reward modulates accumulated returns and the terminal value bootstrap during planning, so rollouts correctly account for early termination. Explicit losses appear in Appendix~\ref{app:networks-training}; the value and generative-prior losses, central to our framework, are described next.

\paragraph{Value learning.}
Following Section~\ref{sec:online}, we replace one-step TD targets with trajectory-level estimates derived from the planner. For each state $s$ in the batch, we run Algorithm~\ref{alg:gtp_inference} to obtain the top-scoring trajectory $\mathbf{a}^\star$ and define a family of $h$-step targets:
\begin{equation*}
\bar{J}^\mathrm{med}_h(s)
=
\mathrm{med}_{i \in [N]}
\, J_h^{(i)}\!\left(s, \mathbf{a}^\star\right),
\qquad h=1,\ldots,H,
\end{equation*}
where the median across ensemble members mitigates overestimation from individual optimistic rollouts. Short horizons are biased by bootstrapping while long horizons accumulate model error; we balance this tradeoff via inverse-variance weights based on cross-ensemble disagreement:
\begin{equation*}
\omega_h(s)
=
\mathrm{softmax}_h\!\left(1/\sigma_h^2(s)\right),
\qquad
\sigma_h^2(s)
=
\mathrm{Var}_{i \in [N]}
\, {J}_h^{(i)}(s,\mathbf{a}^\star),
\end{equation*}
following~\citep{steve-uncertainty} with an added softmax to sharpen the contribution of reliable horizons. The value function is then trained to regress onto the weighted target:
\begin{equation}
\label{eq:value_loss}
\mathcal{L}_{\text{val}}(\psi)
=
\mathbb{E}_{s\sim\mathcal D}
\!\left[
\mathrm{Huber}
\!\left(
V_\psi(s),
\sum_{h=1}^{H}
\omega_h(s)\,
\bar{J}^{\mathrm{med}}_h(s)
\right)
\right].
\end{equation}
We employ the standard target-network and double-value mechanisms used in modern TD learning.

\paragraph{Generative prior learning.}
The diffusion prior is trained to match a target distribution over action sequences induced by trajectory-level evaluations. For each $s$ in the batch, we run beam search to obtain $B$ high-scoring trajectories, augment them with Langevin samples as described in Section~\ref{sec:online}, and take the top-$M$ from the augmented pool as the elite set. Their planner scores $\{J^{m}_H\}_{m=1}^M$ are converted into normalized weights $w^{m}$, and the prior is trained with:
\begin{equation}\label{eq:prior_loss}
\mathcal{L}_{\text{prior}}(\theta) = \mathbb{E}_{s\sim\mathcal{D},\, k,\, \epsilon}\!\left[\, \sum_{m=1}^M w^{m}\, \big\| \epsilon - \epsilon_\theta(\mathbf{a}^{m, k}, s, k) \big\|^2 \right], \; w^{m}(s, \mathbf{a}^{m})=\mathrm{softmax}_m\!\left(J^m_H/\lambda\right)
\end{equation}
where $\mathbf{a}^{m, k}$ is the $m$-th elite trajectory corrupted to diffusion step $k$. The exponential reweighting is analogous to MPPI~\citep{williams2017information}, the main difference being that we amortize this update across states by distilling it into a single generative model. We additionally adapt $\lambda$ online to maintain a target effective sample size~\citep{ess-ref} over the elite weights, avoiding the fragility of a fixed temperature as returns concentrate during training (see Appendix~\ref{app:networks-training}).

\vspace{-5pt}
\subsection{Theoretical Analysis}\label{sec:theory}
\vspace{-5pt}

We analyze the convergence of GTP's generative prior under idealized conditions. Let $p_n(\mathbf{a}\mid s)$ denote the prior after $n$ updates. For a fixed state $s$ and planning objective $J_H(s,\mathbf{a})$, we assume the learned models remain fixed. In the limit of sufficient trajectory sampling and accurate diffusion model fitting, the prior update corresponds to a KL-regularized policy improvement step over action sequences satisfying:
\vspace{-5pt}
\begin{equation}
    p_{n+1}(\mathbf{a}\mid s)
    \propto p_n(\mathbf{a}\mid s)
    \exp\left(\frac{J_H(s,\mathbf{a})}{\lambda}\right).
\end{equation}
We show that these updates improve the prior and converge toward optimal trajectories.
\begin{theorem}[Convergence of the Generative Prior]
\label{thm:prior_convergence}
Suppose $J_H(s,\mathbf{a})$ is bounded and the prior
follows the idealized update with a fixed temperature
$\lambda>0$. Then, each update satisfies:
\begin{equation}
    \mathbb{E}_{p_{n+1}}[J_H]
    \geq
    \mathbb{E}_{p_n}[J_H]
    + \lambda D_{\mathrm{KL}}(p_{n+1}\|p_n).
\end{equation}
Furthermore, if the initial prior assigns positive probability
to the set of $\delta$-optimal action sequences for every
$\delta>0$, then $\mathbb{E}_{p_n}[J_H] \to \sup_{\mathbf{a}} J_H(s,\mathbf{a})$, with the prior assigning asymptotically all probability mass to $\delta$-optimal sequences for every $\delta>0$.
\end{theorem}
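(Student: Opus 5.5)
The plan is to treat each update as an exponential tilt and to work with the closed form $p_n(\mathbf{a}\mid s)\propto p_0(\mathbf{a}\mid s)\exp(nJ_H(s,\mathbf{a})/\lambda)$. This follows by induction from the idealized update. Fix $s$, write $J=J_H(s,\cdot)$ with $|J|\le C$, and set $Z_n=\mathbb{E}_{p_n}[e^{J/\lambda}]$, which lies in $[e^{-C/\lambda},e^{C/\lambda}]$. Then $p_{n+1}=p_n e^{J/\lambda}/Z_n$ is a well-defined probability measure, absolutely continuous with respect to $p_n$.

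\textbf{Step 1 (monotone improvement).} Compute the KL divergence directly:
\[
D_{\mathrm{KL}}(p_{n+1}\|p_n)=\mathbb{E}_{p_{n+1}}\!\left[J/\lambda\right]-\log Z_n .
\]
This gives $\lambda D_{\mathrm{KL}}(p_{n+1}\|p_n)=\mathbb{E}_{p_{n+1}}[J]-\lambda\log Z_n$. By Jensen's inequality, $\lambda\log Z_n=\lambda\log\mathbb{E}_{p_n}[e^{J/\lambda}]\ge\mathbb{E}_{p_n}[J]$. Substituting yields
\[
\mathbb{E}_{p_{n+1}}[J]\ge\mathbb{E}_{p_n}[J]+\lambda D_{\mathrm{KL}}(p_{n+1}\|p_n),
\]
which is the claimed inequality. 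As an alternative route, one can use that $p_{n+1}$ maximizes $\mathbb{E}_q[J]-\lambda D_{\mathrm{KL}}(q\|p_n)$ and compare with $q=p_n$. That route gives only $\mathbb{E}_{p_{n+1}}[J]-\lambda D_{\mathrm{KL}}\ge\mathbb{E}_{p_n}[J]$, which is the same statement. Either way, monotonicity and boundedness give convergence of $\mathbb{E}_{p_n}[J]$ to some limit at most $J^\star:=\sup_{\mathbf{a}}J$.

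\textbf{Step 2 (concentration).} Fix $\delta>0$. Let $A_\delta=\{J>J^\star-\delta\}$ and $A_{\delta/2}=\{J>J^\star-\delta/2\}$. Both sets have positive $p_0$-mass by assumption, and $\eta:=p_0(A_{\delta/2})>0$. From the closed form,
\[
\frac{p_n(A_\delta^c)}{p_n(A_{\delta/2})}\le\frac{p_0(A_\delta^c)\,e^{n(J^\star-\delta)/\lambda}}{\eta\, e^{n(J^\star-\delta/2)/\lambda}}\le\frac{1}{\eta}e^{-n\delta/(2\lambda)}\to0 .
\]
Hence $p_n(A_\delta^c)\to0$, so $p_n$ asymptotically puts all mass on $\delta$-optimal sequences.

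\textbf{Step 3 (convergence of the value).} Split the expectation over $A_\delta$ and its complement:
\[
\mathbb{E}_{p_n}[J]\ge(J^\star-\delta)\,p_n(A_\delta)-C\,p_n(A_\delta^c).
\]
Letting $n\to\infty$ gives $\liminf_n\mathbb{E}_{p_n}[J]\ge J^\star-\delta$. Since $\delta$ is arbitrary, $\mathbb{E}_{p_n}[J]\to J^\star$.

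\textbf{Main obstacle.} The delicate point is Step 2 when the supremum is not attained. We cannot compare against $A_\delta$ itself, since it may contain points with $J$ arbitrarily close to $J^\star-\delta$, and that would not produce an exponential gap. Comparing against the smaller set $A_{\delta/2}$, which still has positive mass by hypothesis, creates the strict gap $\delta/2$ and resolves the issue. Measure-theoretic details are routine once densities are taken with respect to $p_0$: well-definedness uses boundedness of $J$, and absolute continuity ensures the KL divergence is finite.
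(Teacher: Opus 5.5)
Your proof is correct, but part of it follows a different route from the paper's.

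\textbf{One-step inequality.} The paper invokes the variational characterization $p_{n+1}=\arg\max_{q\ll p_n}\{\mathbb{E}_q[J]-\lambda D_{\mathrm{KL}}(q\|p_n)\}$, which it states without proof, and tests it against $q=p_n$. Your identity $\lambda D_{\mathrm{KL}}(p_{n+1}\|p_n)=\mathbb{E}_{p_{n+1}}[J]-\lambda\log Z_n$, combined with Jensen, is exactly the computation behind that principle. Your version is therefore self-contained where the paper's is not.

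\textbf{Concentration.} This step coincides with the paper's, including the use of the $\delta/2$-level set to force a strict exponential gap.

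\textbf{Value convergence.} This is the genuine difference. You derive $\mathbb{E}_{p_n}[J]\to J^\star$ from concentration plus boundedness. The paper instead proves it independently of concentration:
\begin{itemize}
\item it notes that $p_n$ is the exponential tilt of $p_0$ at temperature $\lambda/n$;
\item it applies the variational principle relative to $p_0$;
\item it compares with the conditioned prior $p_0(\cdot\mid U_\delta)$, where $U_\delta=\{J\ge J^\star-\delta\}$; this conditioned prior has KL divergence $\log(1/p_0(U_\delta))$ from $p_0$.
\end{itemize}
This gives $J^\star-\mathbb{E}_{p_n}[J]\le\delta+\frac{\lambda}{n}\log\frac{1}{p_0(U_\delta)}$. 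That bound does not involve the range $C$ of $J$, and it depends on the prior mass only logarithmically.

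Your route gives $J^\star-\mathbb{E}_{p_n}[J]\le\delta+2C\,e^{-n\delta/(2\lambda)}/p_0(A_{\delta/2})$. For fixed $\delta$ this decays exponentially in $n$, but it pays a factor of $C$ and of $1/p_0(A_{\delta/2})$ rather than its logarithm. In exchange, your argument needs no variational machinery at all.
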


This result establishes that iterative prior updates
monotonically improve the expected planning objective and
asymptotically concentrate the generative prior around
optimal trajectories. In practice, GTP employs beam search
to generate improved trajectory proposals, adaptive
temperature selection to control the prior update, and
Langevin augmentation to mitigate mode collapse.
We discuss how these components relate to the theoretical
assumptions and provide formal proofs in
Appendix~\ref{app:theory}.

\vspace{-5pt}
\section{Experiments}\label{sec:experiments}
\vspace{-5pt}
In this section, we present experiments designed to answer three questions.
\textbf{(Q1)} Does combining a diffusion action prior with sample-based planning yield competitive performance on standard continuous-control benchmarks?
\textbf{(Q2)} Does the proposed approach scale to long-horizon manipulation and high-dimensional image observations?
\textbf{(Q3)} Which design choices drive GTP's performance, and how does its exploration behavior differ from existing methods?

\begin{figure}[t]
    \centering
    \includegraphics[width=\linewidth]{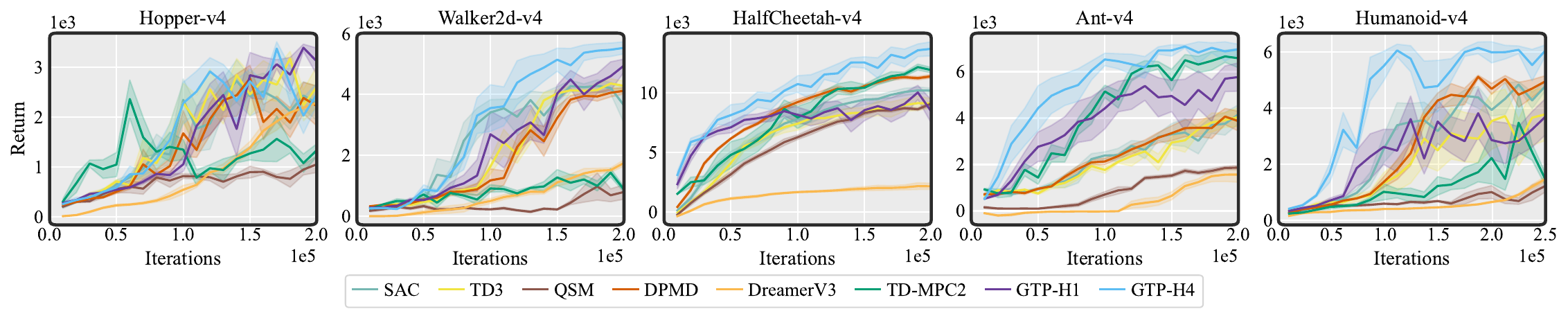}
    \caption{\textbf{GTP consistently outperforms baselines.} Evaluation return curves on OpenAI Gym MuJoCo environments. Solid lines show the mean and shaded regions the standard error over 5 random seeds.}
    \label{fig:mujoco}
    \vspace{-15pt}
\end{figure}

\begin{table}[t]
\centering
\caption{\textbf{Results on locomotion benchmark.} Performance on OpenAI Gym MuJoCo environments. We report
mean returns and standard errors over 5 random seeds, each trained
for 1M environment interactions. }
\resizebox{0.9\linewidth}{!}{%
\begin{tabular}{cl|ccccc}
\toprule
 & Algorithm & Hopper-v4 & Walker2d-v4 & HalfCheetah-v4 & Ant-v4 & Humanoid-v4 \\
\midrule
\textit{Classic} & SAC & 3351 $\pm$ 39 & 4453 $\pm$ 158 & 10515 $\pm$ 286 & 4229 $\pm$ 366 & 5118 $\pm$ 153 \\
\textit{Model-Free RL} & TD3 & 3314 $\pm$ 47 & 4408 $\pm$ 141 & 9301 $\pm$ 546 & 4278 $\pm$ 293 & 3852 $\pm$ 967 \\
\midrule
\textit{Diffusion} & QSM & 1146 $\pm$ 133 & 839 $\pm$ 212 & 9084 $\pm$ 179 & 1868 $\pm$ 97 & 1294 $\pm$ 207 \\
\textit{Policy RL} & DPMD & 3147 $\pm$ 142 & 4133 $\pm$ 109 & 11591 $\pm$ 123 & 4092 $\pm$ 354 & 5269 $\pm$ 44 \\
\midrule
\textit{Trajectory} & CEM-MPC & 141 $\pm$ 37 & 71 $\pm$ 31 & 651 $\pm$ 60 & 379 $\pm$ 53 & 318 $\pm$ 17 \\
\textit{Optimization} & CEM-MPC-V & 2437 $\pm$ 345 & 288 $\pm$ 54 & 7293 $\pm$ 211 & 5103 $\pm$ 811 & 4396 $\pm$ 569 \\
\midrule
\textit{Model-Based} & DreamerV3 & 3031 $\pm$ 65 & 2961 $\pm$ 140 & 2251 $\pm$ 323 & 2279 $\pm$ 365 & 879 $\pm$ 109 \\
\textit{RL} & TD-MPC2 & 2631 $\pm$ 349 & 1622 $\pm$ 148 & 12216 $\pm$ 267 & 7339 $\pm$ 119 & 4160 $\pm$ 795 \\
\midrule
\multirow[c]{2}{*}{Ours} & GTP-H1 & 3398 $\pm$ 71 & 5007 $\pm$ 212 & 10372 $\pm$ 439 & 6207 $\pm$ 418 & 4596 $\pm$ 785 \\
\textit{} & GTP-H4 & \textbf{3525 $\pm$ 49} & \textbf{5546 $\pm$ 201} & \textbf{13944 $\pm$ 498} & \textbf{7369 $\pm$ 73} & \textbf{6660 $\pm$ 132} \\
\bottomrule
\end{tabular}
}
\label{tab:mujoco}
\vspace{-10pt}
\end{table}

We address (Q1) on MuJoCo locomotion (Section~\ref{sec:mujoco});
(Q2) on three state-based manipulation tasks (Fetch) and a vision-based Push-T task (Section~\ref{sec:manipulation}), where the latter demonstrates scalability to high-dimensional inputs; and (Q3) through component ablations, a study of the planning horizon, and an analysis of exploration behavior (Section~\ref{sec:exp_ablations}).
Additionally, we discuss a failure case and its remedy in Appendix~\ref{sec:aggregation_failure}, provide a sensitivity analysis of beam search parameters in Appendix~\ref{app:beam_ablation}, and analyze computational costs, efficien training, and efficient inference in Appendices~\ref{app:timing}, \ref{app:bootstrap}, and~\ref{app:efficient_inference}, respectively.
All methods are implemented in JAX, with hyperparameters detailed in Appendix~\ref{app:networks-training}.

\paragraph{Baselines.}
We compare GTP against baselines spanning four families.
\textit{Model-free RL:} SAC \citep{haarnoja2018soft} and
TD3 \citep{fujimoto2018addressing}, two strong off-policy actor-critic
methods.
\textit{Diffusion policy RL:} QSM \citep{psenka2023learning} and
DPMD \citep{ma2025efficient}, which train diffusion policies with
RL objectives but act without planning at inference.
\textit{Trajectory optimization:} CEM-MPC and its value-augmented
variant CEM-MPC-V, which adds a learned terminal value to the
planning objective. Rather than training separate models, both
variants reuse GTP's learned dynamics and (for CEM-MPC-V)
value function, and replace the diffusion prior with Gaussian CEM
sampling, in similar spirit to~\citep{chua2018deep}. This isolates the contribution of the
diffusion prior while holding the rest of the planning stack fixed.
\textit{Model-based RL:} DreamerV3 \citep{hafner2023mastering} and
TD-MPC2 \citep{hansen2023td}, the current state of the art in
model-based RL. All baselines are run under the same evaluation protocol; we detail
each implementation, its source, and any modifications in
Appendix~\ref{app:baselines}.

\vspace{-10pt}
\subsection{Locomotion Tasks}\label{sec:mujoco}
\vspace{-5pt}

\paragraph{Setup.} All methods are trained for 1M environment interactions using 5 parallel environments (4 for Humanoid). We report mean $\pm$ standard error over 5 runs. GTP is evaluated at horizons $H=1$ and $H=4$ to illustrate the benefit of longer planning.

\begin{figure}[t]
    \centering
    \includegraphics[width=\linewidth]{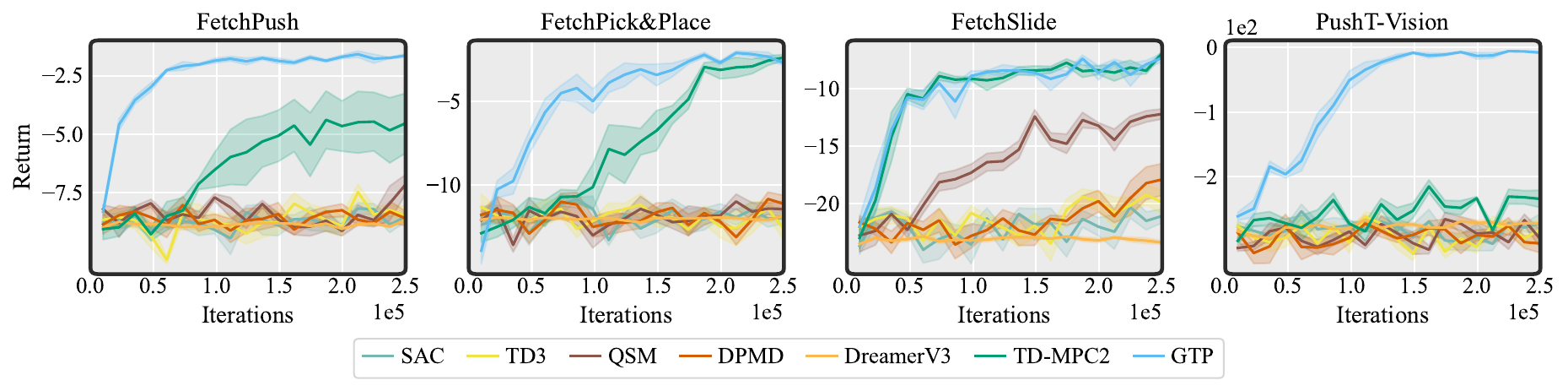}
    \caption{\textbf{GTP improves sample efficiency in long-horizon settings.} Evaluation return curves on Fetch and PushT environments. Solid lines show the mean and shaded regions the standard error over 5 random seeds.}
    \vspace{-15pt}
    \label{fig:fetch}
\end{figure}
\begin{table}[t]
\centering
\caption{\textbf{Long-horizon manipulation results.} Success rates (\%) on long-horizon manipulation tasks.
Mean and standard error over 5 random seeds, each trained for
1M environment interactions.}
\resizebox{0.8\linewidth}{!}{%
\begin{tabular}{ll|cccc}
\toprule
 & Algorithm & FetchPush & FetchPick\&Place & FetchSlide & PushT-Vision \\
\midrule
\textit{Classic} & SAC & 6 $\pm$ 0 & 3 $\pm$ 0 & 1 $\pm$ 1 & 0 $\pm$ 0 \\
\textit{Model-Free RL} & TD3 & 6 $\pm$ 0 & 3 $\pm$ 0 & 5 $\pm$ 1 & 0 $\pm$ 0 \\
\midrule
\textit{Diffusion} & QSM & 12 $\pm$ 3 & 4 $\pm$ 0 & 14 $\pm$ 2 & 0 $\pm$ 0 \\
\textit{Policy RL} & DPMD & 6 $\pm$ 0 & 3 $\pm$ 0 & 5 $\pm$ 1 & 0 $\pm$ 0 \\
\midrule
\textit{Trajectory} & CEM-MPC & 40 $\pm$ 4 & 33 $\pm$ 5 & 50 $\pm$ 3 & 16 $\pm$ 2 \\
\textit{Optimization} & CEM-MPC-V & 78 $\pm$ 6 & 68 $\pm$ 5 & 67 $\pm$ 1 & 33 $\pm$ 4 \\
\midrule
\textit{Model-Based} & DreamerV3 & 7 $\pm$ 1 & 1 $\pm$ 0 & 0 $\pm$ 0 & 0 $\pm$ 0 \\
\textit{RL} & TD-MPC2 & 67 $\pm$ 17 & \textbf{99 $\pm$ 0} & 70 $\pm$ 3 & 0 $\pm$ 0 \\
\midrule
\textit{Ours} & GTP & \textbf{100 $\pm$ 0} & 98 $\pm$ 1 & \textbf{76 $\pm$ 3} & \textbf{99 $\pm$ 1} \\
\bottomrule
\end{tabular}
}
\label{tab:fetch}
\vspace{-15pt}
\end{table}

\paragraph{Results.}
Table~\ref{tab:mujoco} and Fig.~\ref{fig:mujoco} report performance and training curves. GTP-H4 outperforms all baselines across the five MuJoCo environments. While GTP-H1 is competitive with diffusion-based RL methods like DPMD, increasing the horizon to $H=4$ yields substantially stronger performance, confirming the value of trajectory-level planning. Trajectory optimization baselines trail by a wide margin even when reusing GTP's dynamics and value, indicating that sampling-based planning with a Gaussian prior scales poorly here and highlighting the benefit of the diffusion prior. GTP also outperforms strong model-based baselines like TD-MPC2 and DreamerV3, suggesting trajectory-level reasoning has advantages over step-by-step unrolling. GTP-H4 additionally attains high returns faster than all baselines except on Hopper (Fig.~\ref{fig:mujoco}), demonstrating strong sample efficiency. CEM and TD-MPC2 are sensitive to termination conditions, since Gaussian particle rollouts blur termination signals by mixing terminating with non-terminating trajectories; this is most evident on Walker2d, where GTP is unaffected. Together, these results answer \textbf{(Q1)} affirmatively: GTP matches or exceeds the strongest model-free, diffusion-policy, and model-based baselines on standard locomotion benchmarks.

\vspace{-10pt}
\subsection{Long-horizon Manipulation Tasks}\label{sec:manipulation}
\vspace{-5pt}

\paragraph{Setup.}
All methods are trained for 1M environment interactions, collecting
data from 4 parallel environments. We run 5 independent seeds per
method and evaluate the final model on 100 independent trials per
seed, using the same set of evaluation seeds across methods. We
report mean success rate and standard error. GTP is run at planning
horizon $H=8$.

\paragraph{Fetch.}
Table~\ref{tab:fetch} and Fig.~\ref{fig:fetch} summarize performance across the three Fetch tasks. Model-free and diffusion-policy baselines (SAC, TD3, QSM, DPMD) consistently fail with near-zero success rates, confirming that reactive policies are insufficient for long-horizon, contact-rich manipulation. TD-MPC2 is a strong planning-based baseline but unstable: on FetchPush it fails to converge in 2 of 5 seeds, whereas GTP converges reliably in all runs, indicating a clear robustness and sample-efficiency advantage. The same gap appears on FetchPickAndPlace (Fig.~\ref{fig:fetch}), where GTP reaches high reward substantially faster despite similar final success rates. CEM-MPC-V is competitive and improves substantially over vanilla CEM-MPC, confirming the importance of a learned terminal value for sampling-based planning, but still trails GTP and shows higher variance, suggesting that value-guided Gaussian proposals remain limited for exploring complex action sequences. Overall, GTP achieves higher final performance with faster convergence and lower variance than both TD-MPC2 and CEM-based planners, pointing to the generative trajectory prior as the key driver: structured generative proposals are more effective than simple parametric distributions for long-horizon manipulation.

\paragraph{PushT-Vision.}
PushT-Vision is a visual manipulation task built on~\citet{florence2022implicit} (details in Appendix~\ref{app:vision}). As shown in Table~\ref{tab:fetch} and Fig.~\ref{fig:fetch}, GTP achieves near-perfect success while all learning-based baselines completely fail in the default training regime, exposing the limitations of existing methods in long-horizon tasks with high-dimensional observations. Among planning approaches, CEM-MPC-V improves over vanilla CEM-MPC but remains far below GTP, again suggesting that value-guided Gaussian sampling is insufficient in high-dimensional, visually grounded settings. Scaling data alone does not close the gap: extending training by 2M additional environment interactions leaves all baselines unchanged except TD-MPC2, which eventually reaches near-optimal performance but requires substantially more data than GTP (Appendix~\ref{app:pusht}). The early-training success of GTP highlights the sample-efficiency advantage of the generative trajectory prior, which enables effective exploration over trajectories. Combined with the Fetch results, this answers \textbf{(Q2)}: the diffusion-prior planner extends consistently from proprioceptive control to long-horizon manipulation and further to image-based decision making, where its advantages become even more pronounced.

\begin{figure}[t]
\centering
    \begin{minipage}{0.35\linewidth}
    \centering
    \resizebox{\linewidth}{!}{
    \begin{tabular}{llc}
    \toprule
    Category & Variant &  Return \\
    \midrule
    \multirow{2}{*}{Augment.}
    & \lkey{c8} No aug. & 1652 $\pm$ 470 \\
    & \lkey{c7} Gauss. aug. & 4599 $\pm$ 529 \\
    \midrule
    \multirow{2}{*}{Planning}
    & \lkey{c6} TD-1 & 4556 $\pm$ 282 \\
    & \lkey{c5} No plan. & 3177 $\pm$ 782 \\
    \midrule
    \multirow{3}{*}{Ensemble}
    & \lkey{c4} No ensemble & 1998 $\pm$ 475 \\
    & \lkey{c3} No cons. update & 3567 $\pm$ 489 \\
    & \lkey{c2} No opt. rollout & 5238 $\pm$ 206 \\
    \midrule
    Full & \lkey{c1} GTP & 5546 $\pm$ 201 \\
    \bottomrule
    \end{tabular}
    }
    \end{minipage}
    \begin{minipage}{0.48\linewidth}
    \centering
    \includegraphics[width=\linewidth]{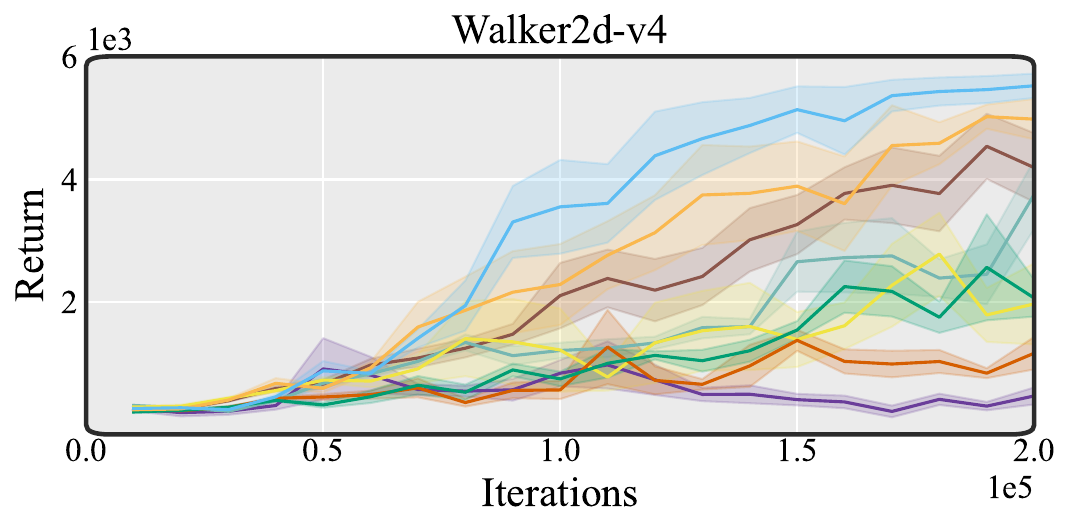}
    \end{minipage}
\caption{ \textbf{Ablations of key components.} Best mean returns and standard errors over 5 random seeds.}
\label{fig:extra-ablations}
\vspace{-10pt}
\end{figure}

\begin{figure}[t]
    \centering
    \begin{minipage}{0.38\linewidth}
    \includegraphics[width=\linewidth]{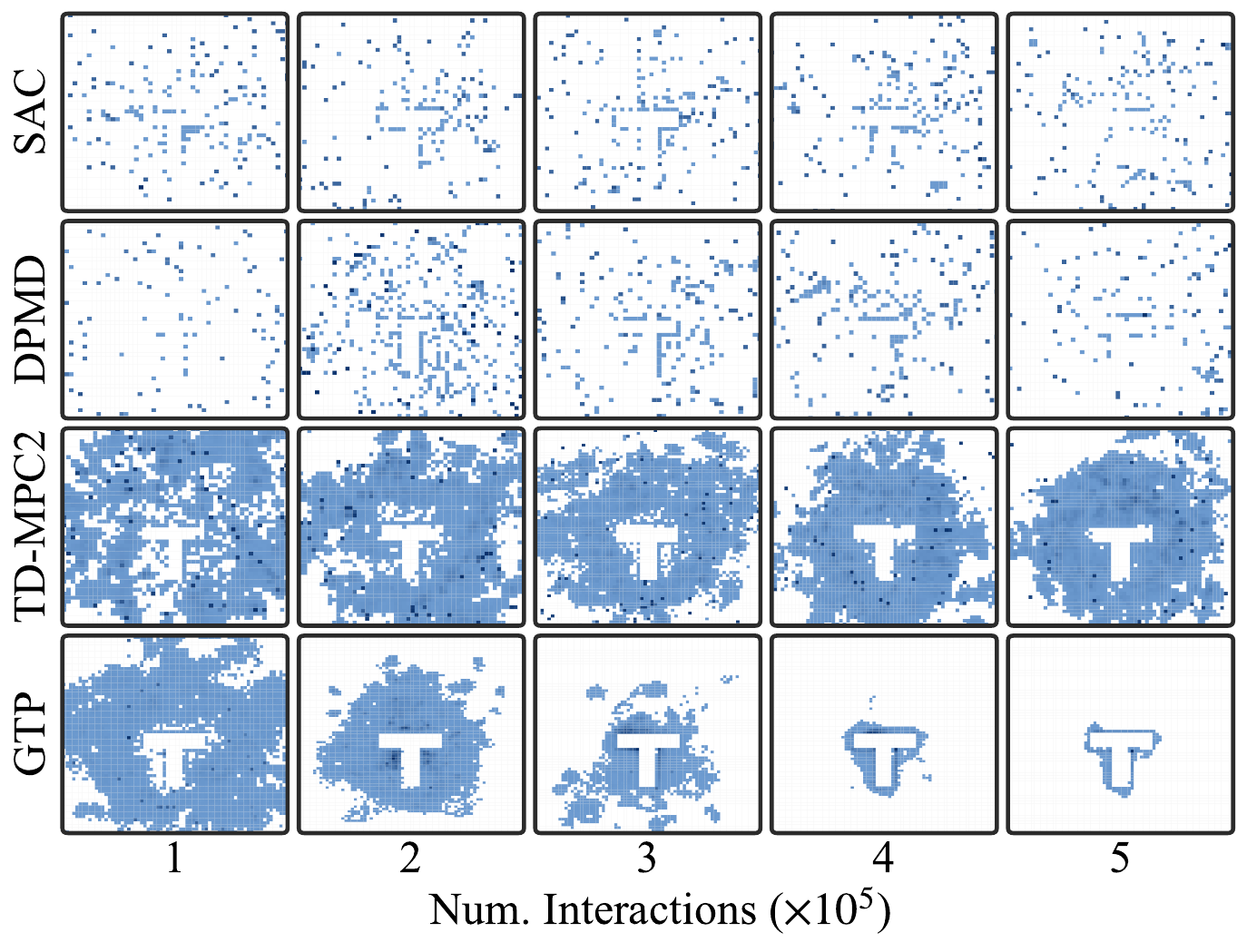}
        \caption{\textbf{Exploration on PushT-Vision.} End-effector visitation heatmaps over training in an object-centric frame.}

    \label{fig:heatmaps}
    \end{minipage}
    \hspace{10pt}
    \begin{minipage}{0.4\linewidth}
    \includegraphics[width=\linewidth]{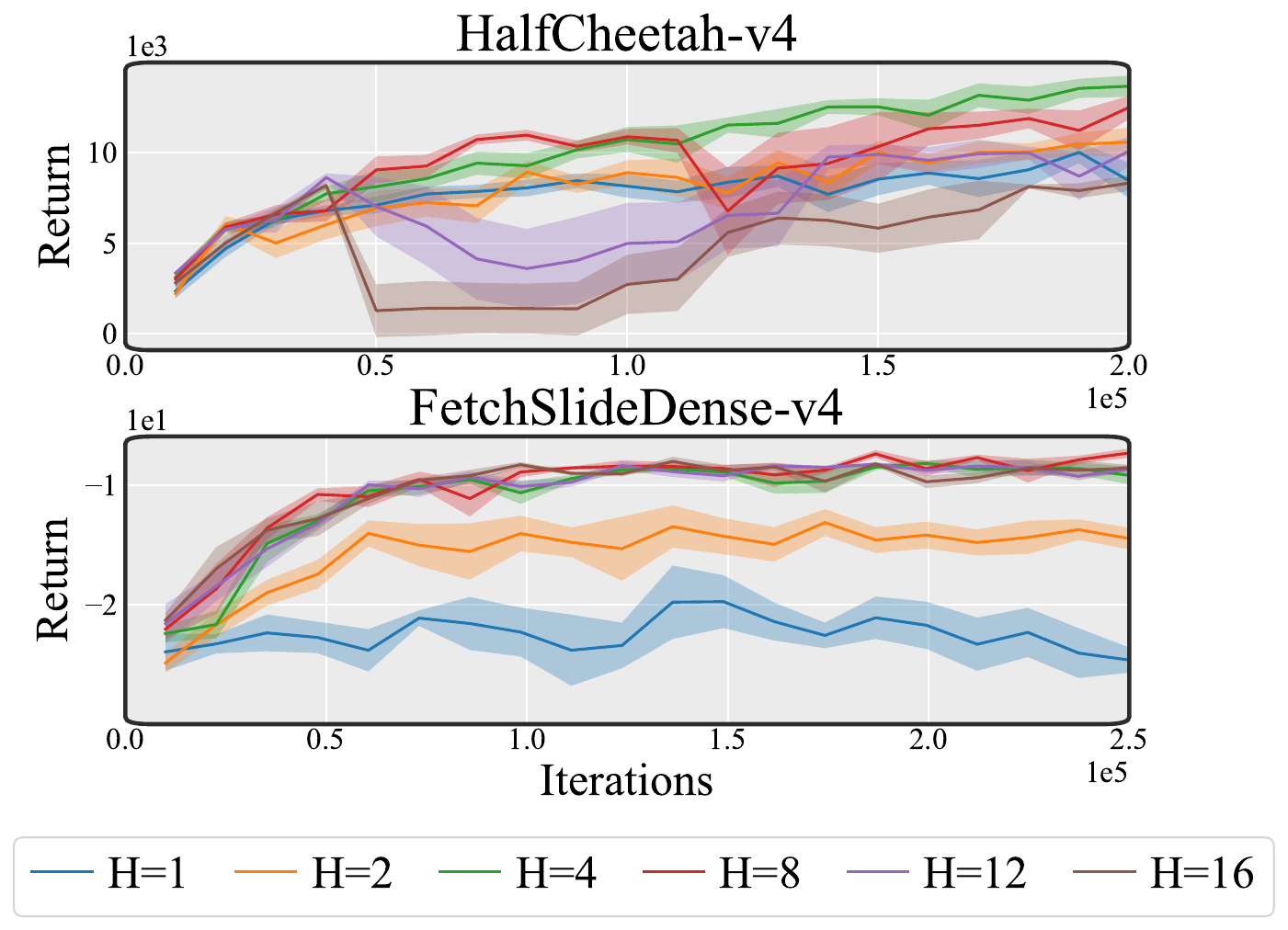}
    \caption{ \textbf{Effect of planning horizon.} Evaluation returns under varying horizon.}
    \label{fig:horizon-ablation}
    \end{minipage}
\vspace{-10pt}
\end{figure}

\vspace{-5pt}
\subsection{Analysis}\label{sec:exp_ablations}
\vspace{-5pt}
Having shown that GTP outperforms strong baselines across our benchmarks, we now examine what drives these gains, ablating its main components, studying the effect of the planning horizon, and analyzing how long-horizon planning shapes exploration during training.

\paragraph{Candidate-set augmentation.}
Removing the Langevin augmentation (\textit{No Aug.}) causes mode collapse: with no mechanism to inject novel candidates into the elite set, the prior concentrates on its current mode and the regression target degenerates into a fixed point. Replacing Langevin with the isotropic Gaussian perturbation used in prior online diffusion RL methods~\citep{wang2024diffusion, ma2025efficient} (\textit{Gauss. Aug.}) recovers much of the lost performance but still trails GTP, consistent with our arguments in Appendix~\ref{app:langevin} that uninformed per-timestep noise scales poorly with horizon.

\paragraph{Planning and value learning.}
Replacing trajectory-level value targets with single-step TD bootstrapping (\textit{TD-1}) yields lower final returns due to having slow long-horizon information propagation that motivated trajectory-level targets in the first place. Removing iterative beam search in favor of a best-of-$BD$ filter over a single batch of samples (\textit{No Plan.}) drops performance substantially, showing that filtering is not a substitute for our refinement procedure in Alg.~\ref{alg:gtp_inference}.

\paragraph{Ensemble and aggregation.}
Collapsing the dynamics ensemble to a single model (\textit{No Ensemble}) is catastrophic, removing the uncertainty signal the rest of the planner relies on. Switching the update aggregation away from the conservative scheme (\textit{No cons. update}) destabilizes value learning: without down-weighting ensemble disagreement, optimistic predictions from individual members leak into the value targets and amplify model bias. Switching the rollout aggregation away from the optimistic scheme (\textit{No opt. rollout}) is more subtle: final return is competitive with GTP, but the training curve in Fig.~\ref{fig:extra-ablations} sits below GTP throughout most of training, indicating that the conservative rollout sacrifices sample efficiency by suppressing exploration of high-uncertainty, potentially high-return sequences.

\paragraph{Planning horizon.}
We sweep $H \in \{1, 2, 4, 8, 12, 16\}$ on HalfCheetah and FetchSlide, training a separate model at each horizon (Fig.~\ref{fig:horizon-ablation}). Increasing the horizon up to $H=4$ yields clear gains on both tasks, confirming the benefit of trajectory-level planning. Beyond $H=4$, behavior diverges: on HalfCheetah, performance degrades as the horizon grows, which we attribute to the difficulty of accurately rolling out learned dynamics in a contact-rich locomotion task; on FetchSlide, performance plateaus rather than degrades, indicating that longer horizons are tolerable when dynamics are simpler. GTP trains stably even at $H=16$ on manipulation tasks, suggesting that the limiting factor at long horizons is dynamics-model accuracy rather than training stability. We hypothesize that an additional benefit of longer horizons is improved exploration. By optimizing entire future trajectories, the planner can reason about delayed consequences of actions and guide data collection toward states that are informative for eventual task completion.

\paragraph{Exploration analysis.} To visualize the exploration behavior induced by different learning algorithms, we analyze the distribution of visited states throughout training on \texttt{PushT}. At several checkpoints, we collect 2000 evaluation rollouts and record the agent end-effector positions. To isolate exploration relative to the object rather than its absolute location, we transform all states into an object-centric coordinate frame by translating the T-shaped block to the center of the workspace. Figure~\ref{fig:heatmaps} then shows a heatmap of the resulting end-effector visitation distribution over the course of training.

The visitation patterns highlight the importance of temporally coherent exploration. SAC and DPMD rarely interact with the object, while TD-MPC2 explores broadly but indiscriminately despite using model-based planning. This suggests that planning over policy rollouts with Gaussian perturbations is insufficient to discover coordinated behaviors, as independently perturbed actions fail to capture temporal dependencies. In contrast, GTP jointly models action sequences, enabling structured exploration that initially discovers useful interactions and progressively concentrates around the object’s contour and key contact regions. This demonstrates how generative trajectory priors can guide exploration toward task-relevant behaviors rather than merely increasing state coverage.

These findings answer (Q3): each design choice contributes to GTP’s performance, with the exploration analysis highlighting its ability to discover directed, task-relevant behaviors rather than diffuse state-space coverage. More broadly, our experiments demonstrate that combining expressive generative trajectory priors with model-based planning yields strong sample efficiency and final performance across diverse locomotion and manipulation tasks. Together, these results support our central hypothesis that generative models can serve as effective proposal distributions for planning, enabling structured exploration and long-horizon reasoning while improving online decision making beyond reactive policies and conventional trajectory optimization.

\vspace{-8pt}
\section{Conclusion}\label{sec:conclusion}
\vspace{-8pt}

We introduced GTP, a model-based framework for online decision making that uses a learned generative prior to propose trajectories for planning. By combining diffusion-based trajectory generation with model-based evaluation and iterative refinement, GTP enables explicit trajectory-level reasoning without optimizing action sequences from scratch at every step. Crucially, planning provides a learning signal, as trajectories discovered through search are distilled into the generative prior, allowing the agent to improve its proposals through interaction. Across continuous control and manipulation benchmarks, GTP achieves strong sample efficiency and performance. Our ablations highlight the importance of iterative refinement, trajectory-level value learning, and structured exploration. Together, these findings highlight the importance of combining expressive generative priors with model-based planning to enable effective long-horizon reasoning and online learning.

Our results suggest several directions for future work. First, reducing reliance on terminal value functions could enable more flexible long-horizon planning, particularly in settings with sparse rewards or unreliable value estimates. Developing alternative approaches to evaluating long-term outcomes could allow GTP to plan over longer horizons without depending heavily on accurate value predictions. Second, extending GTP to more complex tasks with rich visual observations presents an interesting direction for future research. Finally, developing more efficient and effective planning algorithms could improve the quality and diversity of generated trajectories while reducing computational overhead, allowing greater use of inference-time computation to discover behaviors beyond those represented by the current generative prior.

\vspace{-8pt}
\section*{Acknowledgements}
\vspace{-9pt}
This work was supported in part by the NSF AI Institute under Grant Numbers~2112085, 2624398, NSF ECCS-2401390, and ONR Grant N000142512173. The authors also acknowledge support from the Kempner Institute, Pickle Robotics, Amazon AGI Labs, Meta Reality Labs, and Amazon Robotics. The authors thank Weirui Ye for insightful discussions.

\bibliography{neurips_2026}
\bibliographystyle{abbrvnat}

\newpage
%%%%%%%%%%%%%%%%%%%%%%%%%%%%%%%%%%%%%%%%%%%%%%%%%%%%%%%%%%%%

\appendix

\section*{\Large \textbf{Appendix}}

In this appendix, we provide additional implementation details, experimental analyses, and theoretical results to complement the main text. In Appendix~\ref{app:related_work}, we review related work on reinforcement learning and diffusion-based decision making. In Appendix~\ref{app:networks}, we describe network architectures, training procedures, and hyperparameter settings. In Appendix~\ref{app:langevin}, we detail our Langevin-based trajectory augmentation strategy. In Appendix~\ref{sec:aggregation_failure}, we analyze ensemble aggregation strategies and their failure modes. In Appendix~\ref{app:beam_ablation}, we study the sensitivity of GTP to beam search parameters. In Appendix~\ref{app:pusht}, we provide additional environment details and extended training results for PushT-Vision. In Appendix~\ref{app:baselines}, we describe the implementation of baseline methods. In Appendix~\ref{app:vision}, we detail our approach to handling image observations. In Appendix~\ref{app:timing}, we analyze computational costs during training and inference. In Appendix~\ref{app:bootstrap}, we introduce trajectory reuse during training to reduce computational overhead. In Appendix~\ref{app:efficient_inference}, we investigate strategies for reducing inference-time computational costs through planning-free action generation and multi-step execution. Finally, in Appendix~\ref{app:theory}, we present theoretical convergence guarantees for the generative prior, together with proofs and a discussion of the underlying assumptions.

\vspace{-10pt}
\section{Related Work}\label{app:related_work}
\vspace{-5pt}

\paragraph{Reinforcement learning.}
Reinforcement learning has achieved strong empirical success in continuous control through both model-free and model-based approaches. Model-free methods such as SAC~\cite{haarnoja2018soft} and TD3~\cite{fujimoto2018addressing} learn reactive policies by optimizing value functions, but rely on short-horizon bootstrapping, which can limit long-horizon reasoning. Model-based methods aim to address this limitation by learning predictive models of the environment and incorporating them into decision making. Early approaches such as PETS~\cite{chua2018deep} combine learned dynamics with sampling-based planning, enabling explicit reasoning over action sequences but requiring optimization from scratch at each step. More recent methods integrate model learning with value estimation and policy optimization. DreamerV3~\cite{hafner2023mastering} learns a latent world model and trains policies and value functions entirely in imagination, using model rollouts to improve sample efficiency. TD-MPC2~\cite{hansen2023td} combines learned latent dynamics with short-horizon planning and value bootstrapping, achieving strong performance by tightly coupling model predictive control with reinforcement learning. Despite these advances, decision making in these methods remains largely policy-centric. Actions are selected step-by-step using learned value functions or short-horizon planning with bootstrapping, rather than reasoning over entire trajectories. Model-based rollouts interleave dynamics predictions with policy or value evaluations at each step, which limits the ability to capture coherent, trajectory-level structure. Exploration is likewise driven by local perturbations, typically Gaussian noise in action space, rather than structured variation over action sequences. Our approach addresses these limitations by shifting from step-wise decision making to trajectory-level reasoning through our learned generative prior.

\paragraph{Diffusion models for online decision-making.}
A growing body of work uses diffusion models as the policy class for online RL. \citet{psenka2023learning} dispense with an explicit policy network entirely and sample actions by running Langevin dynamics on the gradient of a learned critic, treating the Q-function as an unnormalized log-density over actions. While elegant, this approach inherits the variance and conditioning issues of critic gradients and offers no learned proposal to amortize sampling cost. \citet{wang2024diffusion} instead train a dedicated diffusion policy and update it by backpropagating through the full denoising chain; this enables expressive multimodal action distributions but is computationally expensive and, in their reported results, fails to translate the added capacity into reliable performance gains. More recent methods~\citep{ding2024diffusion, ma2025efficient} sidestep this bottleneck by training the diffusion policy with denoising score-matching objectives derived from the critic, decoupling policy learning from the sampling chain and yielding strong results on standard continuous-control benchmarks.

Despite their use of diffusion, these methods retain the structure of standard off-policy actor-critic algorithms: the policy is updated to follow the critic, and exploration is left to the environment-step noise injected using Gaussian perturbations. There is no explicit mechanism for proposing trajectories the current policy is unlikely to generate, no use of the model class to plan over multiple timesteps, and no objective-aware perturbation of the proposal distribution. As a consequence, the practical benefit of using a diffusion policy reduces to a more expressive single-step action distribution, leaving the broader generative capacity of diffusion largely unexploited. Our work targets exactly this gap by treating the diffusion model as a learned proposal over action \emph{sequences}, refined online with planning through learned models.

\paragraph{Diffusion models for offline decision-making and imitation.}
A parallel line of work uses diffusion models to generate \emph{trajectories} rather than single actions, exploiting the multi-step structure of control. \citet{janner2022planning} introduce Diffuser, which trains an unconditional diffusion model over state-action sequences offline and steers sampling at inference time with classifier-style guidance from a learned reward. \citet{ajay2023conditional} refine this idea with classifier-free conditioning, showing that returns, goals, and constraints can be incorporated directly into the generative model and that the resulting planner is competitive with strong offline RL baselines. \citet{chi2023diffusion} demonstrate that diffusion policies trained over short action chunks dramatically outperform Gaussian and energy-based alternatives on real-world manipulation, and that the gains come precisely from the model's ability to produce temporally coherent multi-step actions. Collectively, these results establish that diffusion is a strong fit for sequence-level decision making when supervision is plentiful, whether through offline datasets or expert demonstrations. Our setting differs in that supervision is generated online from the planner itself, but the same property, that diffusion models excel at producing coherent action sequences, is what motivates our use of a sequence-level prior over the single-action policies common in online decision-making.

\section{Network Architectures And Hyperparameters}\label{app:networks}

\subsection{Networks and Training}\label{app:networks-training}
The diffusion prior is parameterized as a temporal U-Net with FiLM conditioning~\cite{perez2018film}, jointly conditioned on the current observation and a sinusoidal embedding of the diffusion step, and trained under the standard $\epsilon$-prediction parameterization with $K=20$ denoising steps. The dynamics model is an ensemble of $N=5$ MLPs with independently initialized parameters that predicts the residual $s_{t+1} - s_t$ rather than the next state directly similar to~\citep{chua2018deep, janner2019trust}. The reward model is a single MLP with two heads that jointly predicts the per-step reward and a termination logit. The critic follows a clipped double-V design with value heads and an exponential moving average target network updated via Polyak averaging at rate $\tau=0.005$ every two gradient steps~\citep{fujimoto2018addressing}. All MLPs and the diffusion prior network use Swish activation~\citep{swish}. The observations are standardized by a running mean and variance maintained online over rollout data.

\paragraph{Dynamics loss.}
We train the ensemble of dynamics models $\{f_{\phi_i}\}_{i=1}^N$ to predict next states from state-action pairs. To encourage diversity across the ensemble, at each training step we draw an independent bootstrap sample (with replacement) from the current minibatch for each ensemble member, and minimize the one-step prediction error:
\begin{equation}
\mathcal{L}_{\text{dyn}}(\phi_i) 
= \mathbb{E}_{(s_t, a_t, s_{t+1}) \sim \mathcal{B}_i}
\left[ \left\| s_{t+1} - f_{\phi_i}(s_t, a_t) \right\|^2 \right],
\end{equation}
where $\mathcal{B}_i$ is a bootstrap resample of the current minibatch $\mathcal{B} \subset \mathcal{D}$.

\paragraph{Reward loss.}
The reward model $r_\xi$ is trained to approximate the environment reward from observed transitions $(s_t, a_t, r_t)$ via mean-squared regression:
\begin{equation}
\mathcal{L}_{\text{rew}}(\xi)
= \mathbb{E}_{(s_t, a_t, r_t) \sim \mathcal{D}}
\left[ \big( r_t - r_\xi(s_t, a_t) \big)^2 \right].
\end{equation}

\paragraph{Termination loss.}
The termination head $d_\xi$ shares its trunk with the reward model and outputs a logit for the probability that the trajectory terminates at step $t$. It is trained with binary cross-entropy against the observed termination signal:
\begin{equation}
\mathcal{L}_{\text{term}}(\xi)
= \mathbb{E}_{(s_t, a_t, d_t) \sim \mathcal{D}}
\left[ \mathrm{BCE}\big(d_\xi(s_t, a_t),\, d_t\big) \right],
\end{equation}
where $d_t \in \{0, 1\}$ indicates whether the environment terminated at step $t$ and $\mathrm{BCE}$ denotes binary cross-entropy applied to the predicted logit. The reward and termination heads are trained jointly under $\mathcal{L}_{\text{rew}} + \mathcal{L}_{\text{term}}$. Given predicted termination probabilities $\{\hat{d}_t\}_{t=0}^{H-1}$, we compute survival weights via the cumulative product $\prod_{k=0}^{t-1} (1 - \hat{d}_k)$, which modulate reward accumulation and bootstrap the terminal value, ensuring value estimates correctly account for early termination under model rollouts.

\paragraph{Temperature loss.}
The temperature $\lambda$ in the prior loss
(Eq.~\ref{eq:prior_loss}) controls how peaked the planner-derived
weights $w^{m} \propto \exp(J_H^{m} / \lambda)$ are over the
elite trajectories. Setting $\lambda$ by hand is particularly
tricky in our setting because the appropriate value drifts
throughout training: as the prior sharpens, beam-search returns
cluster more tightly, so a $\lambda$ that produced informative
weights early in training quickly becomes too large and flattens
the loss into a near-uniform mixture, while continually decreasing
$\lambda$ by hand requires careful scheduling per task. We
therefore adapt $\lambda$ online to target a fixed effective sample
size~\citep{ess-ref} (ESS) over the elite weights. For a batch of states
$\{s_t\}$ each with elite scores $\{J_H^{m}(s_t)\}_{m=1}^M$, the
ESS at temperature $\lambda$ is
\begin{equation}
\mathrm{ESS}(\lambda; s_t) \;=\; \frac{1}{M \sum_{m=1}^{M} (w^{m}(s_t; \lambda))^2}, \qquad
w^{m}(s_t; \lambda) = \frac{\exp(J_H^{m}(s_t) / \lambda)}{\sum_{m^\prime=1}^{M} \exp(J_H^{m^\prime}(s_t) / \lambda)},
\end{equation}
which lies in $(1/M,\, 1]$ and equals $1$ when the weights are
uniform. We optimize $\lambda$ to match a target ESS
$\rho \in (1/M, 1]$ by minimizing
\begin{equation}
\mathcal{L}_{\lambda}(\lambda) \;=\; \mathbb{E}_{s_t \sim \mathcal{D}}
\left[ \big( \mathrm{ESS}(\lambda; s_t) - \rho \big)^2 \right].
\end{equation}
This adapts the sharpness of the prior's training signal to the
current spread of beam-search returns automatically, removing the
need to schedule $\lambda$ by hand as the prior sharpens during
training.

\paragraph{Horizon curriculum.} During training, we linearly increase the planning horizon from $1$ up to its final value $H$ ($300{,}000$ environment steps for MuJoCo and $350{,}000$ environment steps for manipulation). This curriculum lets the dynamics, reward, and value models stabilize on short-horizon predictions before being asked to support long-horizon planning, where compounding model errors would otherwise dominate the objective and destabilize policy learning early in training.

\begin{table}[h]
\centering
\caption{Hyperparameters for GTP across task suites.}
\label{tab:hparams}
\resizebox{0.6\linewidth}{!}{
\begin{tabular}{lcc}
\toprule
\textbf{Hyperparameter} & \textbf{MuJoCo} & \textbf{Manipulation} \\
\midrule
\multicolumn{3}{l}{\textit{Diffusion Policy (UNet)}} \\
\quad Planning horizon $H$              & 4              & 8 \\
\quad Base channel dim                  & 64             & 32 \\
\quad Channel multipliers               & $[1, 2]$       & $[1, 2, 4]$ \\
\quad Conditioning embedding dim        & 64             & 32 \\
\midrule
\multicolumn{3}{l}{\textit{Dynamics Ensemble}} \\
\quad Hidden dims                       & \multicolumn{2}{c}{$[512, 512, 512, 512]$} \\
\quad Ensemble size                     & \multicolumn{2}{c}{5} \\
\midrule
\multicolumn{3}{l}{\textit{Reward Model}} \\
\quad Hidden dims                       & \multicolumn{2}{c}{$[256, 256, 256]$} \\
\midrule
\multicolumn{3}{l}{\textit{Value function}} \\
\quad Hidden dims                       & \multicolumn{2}{c}{$[256, 256, 256]$} \\
\midrule
Diffusion steps $K$                     & \multicolumn{2}{c}{20} \\
\midrule
\multicolumn{3}{l}{\textit{Beam-Search Sampling}} \\
\quad Total samples                 & \multicolumn{2}{c}{32} \\
\quad Beam width $B$                    & \multicolumn{2}{c}{2} \\
\quad Beam branching factor $D$                   & \multicolumn{2}{c}{4} \\
\quad Number of beams                  & \multicolumn{2}{c}{4} \\
\quad Resampling denoising steps $K_\mathrm{ref}$        & \multicolumn{2}{c}{5} \\
\midrule
\multicolumn{3}{l}{\textit{Policy Update}} \\
Number of elites                              & \multicolumn{2}{c}{4} \\
Target ESS                              & \multicolumn{2}{c}{0.8} \\
\midrule
\multicolumn{3}{l}{\textit{Optimization}} \\
\quad Learning rate                     & \multicolumn{2}{c}{$3 \times 10^{-4}$} \\
\quad Target network update $\tau$      & \multicolumn{2}{c}{0.005} \\
\quad Discount factor $\gamma$          & \multicolumn{2}{c}{0.99} \\
\bottomrule
\end{tabular}}
\end{table}

\subsection{Planning and Sampling}
At deployment, we sample candidate action sequences via beam search over the diffusion prior with a total sampling budget of $32$ trajectories. The default configuration uses $4$ parallel beams with width $2$, and apply $K_{\mathrm{ref}}=5$ denoising steps per beam expansion. Across the dynamics ensemble, we aggregate predicted returns using the median for updates and the maximum for rollout selection in the non-terminating locomotion environment \texttt{HalfCheetah-v4} and in manipulation tasks. For terminating environments, we instead use a more pessimistic aggregation scheme (minimum for updates and median for rollout), as detailed in Appendix~\ref{sec:aggregation_failure}. Furthermore, for \texttt{HalfCheetah-v4}, we adopt a narrower configuration with a single beam and $B=2$. Crucially, we find that performance is largely robust to beam search hyperparameters, as demonstrated in Appendix~\ref{app:beam_ablation}.

\section{Augmenting Planned Trajectories via Langevin Dynamics}
\label{app:langevin}

\paragraph{Motivation.}
Training the diffusion prior $p_\theta$ follows a sample-and-imitate loop: at each update step, we draw candidate action sequences from the current model, score them under the planning objective $J_H$, and regress $p_\theta$ onto the elite set reweighted by their scores. This loop has a well-known failure mode. As $p_\theta$ sharpens, its samples concentrate around a small region of trajectory space; the elite set drawn from these samples then differs little from what the model already produces, the regression target collapses onto the current mode, and learning stalls. Avoiding collapse requires a mechanism that periodically injects \emph{novel} candidates into the elite set, i.e. trajectories the prior is unlikely to have generated on its own but that nonetheless score well under $J_H$.

A common remedy when training diffusion policies is to perturb sampled actions with isotropic Gaussian noise~\citep{psenka2023learning, ma2025efficient, ding2024diffusion}. We find this unsatisfying for two reasons. First, it is unprincipled: the perturbation direction is uninformed by the objective, so most of the injected variance lands on low-value trajectories and are discarded, wasting sample budget. Second, our setting amplifies this weakness. We are learning a generator over \emph{action sequences} rather than single actions, so independent Gaussian noise across timesteps produces temporally incoherent perturbations that bear little resemblance to plausible trajectories. Worse still, we explicitly want $p_\theta$ to retain enough stochasticity to support beam search at deployment: a near-deterministic prior would defeat the purpose of using a sampling-based planner, so the perturbation mechanism should not encourage the prior to memorize a single mode.

We therefore augment the sampled trajectories with a few Langevin steps~\citep{roberts1996exponential} obtained by ascending $J_H$. Because the perturbation follows the gradient of the objective itself, novel candidates are drawn toward regions of trajectory space that are simultaneously \emph{distinct} from the current prior and \emph{high-scoring} under the model which are the two properties we desire.

\paragraph{Procedure.}
After beam search returns action sequences $\{\mathbf{a}^{b}\}_{b=1}^{B}$, we generate Langevin counterparts $\{\tilde{\mathbf{a}}^{b}\}_{b=1}^{B}$ by running $L_\text{lan}=4$ steps of
\begin{equation}
    \mathbf{a} \;\leftarrow\;\mathbf{a} \;+\; \eta\,\nabla_{\mathbf{a}}\, J_H(s,\mathbf{a}) \;+\; \sqrt{2\eta}\,\boldsymbol{\epsilon}, \qquad \boldsymbol{\epsilon}\sim\mathcal{N}(0, I),
    \label{eq:langevin}
 \end{equation}
with the gradient taken with respect to the full action sequence. The step size $\eta$ and noise scale $\sigma$ are coupled via $\eta = \sigma^2/2$, recovering the standard Langevin interpretation.  To let the implicit target distribution sharpen as the planning horizon extends and the model becomes more reliable, we schedule $\eta$ on a cosine curve, taking the standard deviation of noise from $0.2$ down to $0.1$ over the same window as the horizon curriculum. The combined set $\{\mathbf{a}^{b}\}\cup\{\tilde{\mathbf{a}}^{b}\}$ is then re-scored under $J_H$, and the policy loss is computed over the elite set obtained from the augmented pool.

\section{Aggregation Sensitivity and Failure Modes}
\label{sec:aggregation_failure}

In this section, we study the impact of ensemble aggregation in
termination-sensitive environments and show that a more conservative,
risk-sensitive scheme substantially improves performance. We observe that the
choice of aggregation across dynamics models plays a critical role in
environments with early termination, such as \texttt{Ant-v4}. In these
settings, our default configuration (median aggregation during training
updates and max aggregation during rollout evaluation) leads to degraded
performance. This effect arises from the interaction between model bias and
termination sensitivity: small prediction errors compound over the planning
horizon and quickly produce unstable states that trigger termination. Under
max aggregation, the planner favors trajectories that appear high-return under
at least one ensemble member, even when they are not consistently supported
across the ensemble, yielding brittle plans that fail under slight model
mismatch.

To address this, we adopt a more pessimistic aggregation scheme, using
minimum aggregation during updates and median aggregation during rollout
evaluation. This reduces sensitivity to overestimated returns from individual
models and biases planning toward trajectories that are consistently predicted
to perform well across the ensemble, leading to significantly improved
stability and return. Figure~\ref{fig:fail-ablation} reports performance on
\texttt{Ant-v4} under both schemes, following the experimental procedure of
Section~\ref{sec:mujoco}. The conservative aggregation yields a substantial
improvement, recovering strong performance on this domain.

More broadly, aggregation defines an implicit risk preference over the
ensemble-induced uncertainty: optimistic choices emphasize best-case
predictions, while conservative choices approximate a worst-case objective.
This connects naturally to robust MPC~\cite{bemporad2007robust}, which selects actions that perform reliably across an admissible set of models rather than optimally under a
nominal one; our minimum aggregation strategy can be viewed as a sample-based
surrogate for such a worst-case objective over the ensemble. It also relates
to risk-sensitive and distributional control, where the planning objective
is shaped by a functional of the return distribution rather than its
expectation, and our use of the aggregation strategies corresponds
to a simple quantile-based instance of this idea. We leave a more thorough
investigation of these connections to future work, including richer
risk-sensitive objectives~\citep{chow2015risk}, and
UCB-style criteria~\citep{auer2002finite} that trade off mean return against ensemble disagreement.

\begin{figure}[t]
    \centering
    \includegraphics[width=0.65\linewidth]{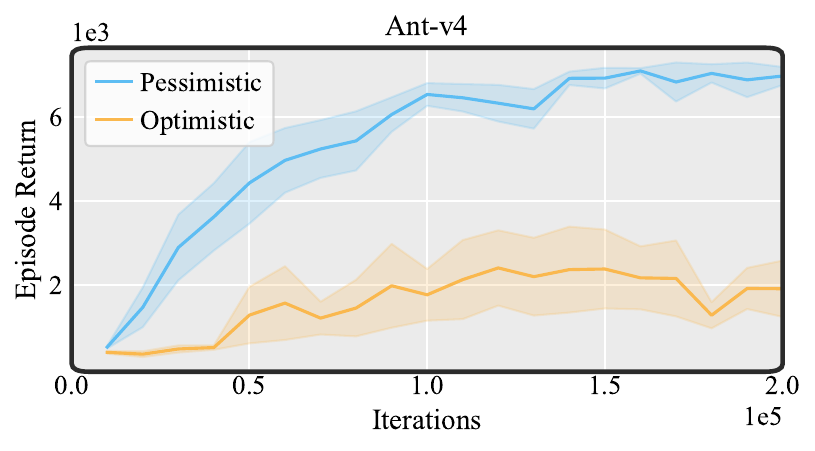}
    \vspace{-10pt}
    \caption{\textbf{Effect of ensemble aggregation on Ant-v4.} Pessimistic aggregation (min/median for update/rollout) substantially outperforms optimistic aggregation (median/max) on this termination-sensitive environment.}
    \label{fig:fail-ablation}
    \vspace{-10pt}
\end{figure}

\vspace{-5pt}
\section{Beam Search Parameters}\label{app:beam_ablation}
\vspace{-5pt}

We sweep the two beam search hyperparameters (the number of beams and the per-beam width $B$) under a fixed sampling budget of 32 on \texttt{HalfCheetah-v4} and report results in Table~\ref{tab:beam_ablation}. Performance is largely insensitive to either choice within the swept range: seven of the eight configurations exceed the strongest baseline reported in Section~\ref{sec:experiments}, with the remaining configuration still within range. This suggests our method is robust to beam search hyperparameter selection, and the gains we report are not the product of careful tuning along this axis.

\vspace{-10pt}
\begin{table}[h]
\centering
\caption{\texttt{HalfCheetah-v4} environment best mean returns and standard errors over 200k iterations with 5 random seeds for different beam widths $B$ and number of beams.}
\label{tab:beam_ablation}
\resizebox{0.8\linewidth}{!}{
\begin{tabular}{c|cccc}
\toprule
 beam width $ \backslash$ num. beams  & 1 & 2 & 4 & 8 \\
\midrule
2 & \textbf{13944} $\pm$ \textbf{498} & 13830 $\pm$ 517 & 13338 $\pm$ 174 & 12892 $\pm$ 778 \\
4 & 11819 $\pm$ 481 & 13587 $\pm$ 572 & 12614 $\pm$ 792 & 12268 $\pm$ 326 \\
\bottomrule
\end{tabular}}
\vspace{-10pt}
\end{table}

\section{PushT-Vision: Environment Details and Extended Training}
\label{app:pusht}

\paragraph{Environment.}
PushT is a planar manipulation task commonly used in imitation learning~\citep{florence2022implicit, chi2023diffusion}, in which a circular end-effector pushes a T-shaped block from a randomized initial pose into a fixed target pose. We adapt the publicly available
implementation to the online setting; the resulting code is released
alongside our codebase\footnote{\url{\codeurl}}. Observations are
$96\times96$ RGB renderings (cf. Figure~\ref{fig:pusht-3M}) of the workspace, actions are velocity commands, and the reward is a shaped dense signal combining squared
pose error between the block and target, the agent--block distance,
and small penalties for wall contacts. Episodes
last at most 300 steps and are deemed successful when the
intersection-over-union between the block and target exceeds $0.95$.

\paragraph{Extended training.}
The 1M-step budget used in Section~\ref{sec:manipulation} is short relative to standard
visuomotor learning protocols, which raises the question of whether the
0\% success rates we report for the model-free and diffusion-policy
baselines reflect a fundamental limitation or simply insufficient
training. To address this, we re-run all baselines and GTP on
PushT-Vision for 3M environment interactions under otherwise identical
settings (5 seeds, 4 parallel environments, 100 evaluation trials per
seed). Results are shown in Table~\ref{tab:pusht-3M} and
Figure~\ref{fig:pusht-3M}.

% --- Replace the placeholders below with your actual numbers ---
\begin{table}[t]
\centering
\caption{PushT-Vision success rate (\%) at 3M environment interactions.
Mean and standard error over 5 seeds.}
\label{tab:pusht-3M}
\resizebox{0.8\linewidth}{!}{
\begin{tabular}{lccccccc}
\toprule
 & SAC & TD3 & QSM & DPMD & DreamerV3 & TD-MPC2 & GTP (ours) \\
\midrule
Success Rate & 0 $\pm$ 0 & 0 $\pm$ 0 & 0 $\pm$ 0 & 0 $\pm$ 0 & 0 $\pm$ 0 & 97 $\pm$ 1 & \textbf{99 $\pm$ 1} \\
\bottomrule
\end{tabular}}
\vspace{-10pt}
\end{table}

\begin{figure}[t]
    \centering
    \includegraphics[width=0.9\linewidth]{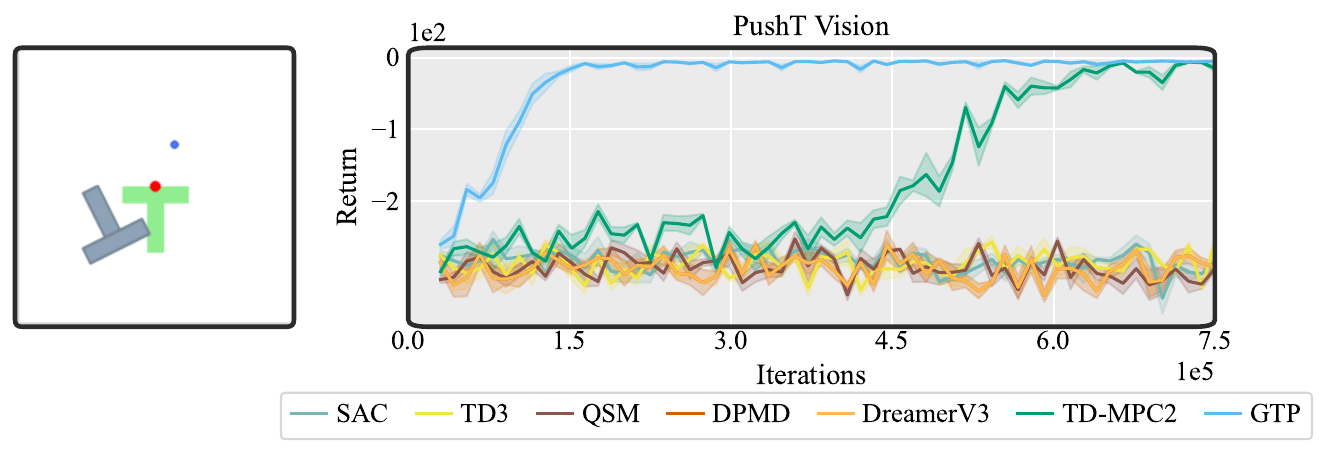}
    \caption{\textbf{Left:} Rendering of the PushT-Vision environment.
    \textbf{Right:} Evaluation returns on PushT-Vision over 3M environment interactions. Solid lines show the mean and shaded regions the standard error over 5 seeds.}
    \label{fig:pusht-3M}
\vspace{-20pt}
\end{figure}

% Brief discussion paragraph --- adjust based on what the numbers show.
\paragraph{Discussion.}
The results sharpen the picture from Section~\ref{sec:manipulation}. Even with $3\times$ more
environment interactions, SAC, TD3, QSM, DPMD, and DreamerV3 remain at
$0\%$ success on PushT-Vision, indicating that their failure at 1M is
structural rather than a matter of training budget: reactive policies
updated against single-step value targets do not discover the coherent,
multi-step push sequences this task requires, and additional samples
alone do not change this. TD-MPC2 is the one baseline that eventually
solves the task, reaching a statistical tie with
GTP, but only after substantially more interactions
(see Fig.~\ref{fig:pusht-3M}). We attribute this gap to the role of the
diffusion prior: GTP samples coherent action sequences from a learned
trajectory distribution from the outset, whereas TD-MPC2 must first
bootstrap a sufficiently accurate value function before its
short-horizon planner can recover similar behaviors. Taken together,
these extended runs support our claim that GTP scales to
high-dimensional observations, and additionally show that on
PushT-Vision the diffusion prior translates into a clear sample-efficiency advantage over the strongest baseline rather than a final-performance one.
\vspace{-5pt}
\section{Baseline Implementations}\label{app:baselines}
\vspace{-5pt}

We describe the source and any modifications for each baseline.
Across all methods, we use the original authors' recommended
hyperparameters unless noted otherwise, and run every method under
the shared evaluation protocol described in
Sec.~\ref{sec:experiments}.

\paragraph{SAC and TD3.}
We implement SAC \citep{haarnoja2018soft} and
TD3 \citep{fujimoto2018addressing} ourselves in JAX, following the
original papers.

\paragraph{DPMD.}
We use the original authors' codebase~\citep{ma2025efficient} and
their published hyperparameters without modification.

\paragraph{QSM.}
We use the implementation from \citet{ma2025efficient}\footnote{\url{https://github.com/mahaitongdae/diffusion_policy_online_rl}},
which reproduces the original method~\citep{psenka2023learning}, with the
hyperparameters provided in that repository.

\paragraph{TD-MPC2.}
The original TD-MPC2 codebase~\citep{hansen2023td} was
difficult to adapt to our training and evaluation protocol, so we
reimplemented the method in JAX. We follow the original
architecture, training objectives, and hyperparameters.

\paragraph{DreamerV3.}
We use the original authors' codebase~\citep{hafner2023mastering}
with default hyperparameters, modified only to interface with our
evaluation harness.

\paragraph{CEM-MPC and CEM-MPC-V.}
Both variants use a custom CEM solver that we tune to be a faithful
sampling-based planner. CEM-MPC plans without a terminal value and
uses a planning horizon of $H=20$, long enough to capture
task-relevant returns under the learned dynamics. CEM-MPC-V adds a
learned terminal value and uses the same horizon as GTP for a
direct comparison with the diffusion-prior planner. Both variants
share GTP's learned dynamics model, and CEM-MPC-V additionally
shares GTP's value function; this isolates the effect of the
action prior. We tune the number of samples, elite fraction, number
of CEM iterations, and noise schedule per task family
(locomotion vs.\ manipulation), and report the best result over the
last three checkpoints of each run.

\vspace{-10pt}
\section{Handling Image Observations}\label{app:vision}
\vspace{-10pt}

\paragraph{GTP.}
To handle image observations, GTP encodes the input frame into a
compact latent and operates on this latent in place of the
proprioceptive state. The autoencoder is trained jointly with the
dynamics and reward models, with a reconstruction loss added to the
existing dynamics and reward losses; this lets representation
learning be shaped by the predictive objectives rather than
reconstruction alone.

\paragraph{TD-MPC2 and DreamerV3.}
We use their original architectures, as both of these methods are already designed to handle both state and image inputs.

\paragraph{Other baselines.}
We use the same encoder--decoder architecture as GTP, but train it
with a standalone reconstruction loss on the replay buffer since
these methods do not learn dynamics or reward models. The resulting
latent is used in place of the state input.

\paragraph{Encoder--decoder architecture.}
The encoder is a 4-block convolutional network (channels
$32, 32, 64, 64$; strides $2, 2, 2, 1$; GroupNorm and swish after
each block) followed by a dense projection to a latent of size $d$
with a final LayerNorm. The decoder projects the latent to a
$12{\times}12$ feature map and applies bilinear upsampling followed
by a conv block at each stage (channel schedule $64, 32, 32$),
with a final conv mapping to RGB.

\vspace{-10pt}
\section{Computational Cost}\label{app:timing}
\vspace{-10pt}

We report wall-clock timing for action selection and gradient updates
to characterize the computational profile of GTP relative to
representative baselines. Two costs are relevant in online decision
making: the per-step inference cost incurred during environment
interaction and the per-update cost incurred during training.
GTP introduces additional computation through diffusion-based
trajectory generation and beam search, which we quantify below.
We subsequently investigate strategies for reducing these costs
in Appendices~\ref{app:bootstrap} and~\ref{app:efficient_inference}.

\paragraph{Setup.}
All measurements were taken on a single NVIDIA RTX 5090 GPU
(32\,GB), using our JAX implementation. Times are averaged
over 1{,}000 steps after a warm-up phase to exclude compilation
overhead. Each baseline was run under a matched batch size
of 256 and identical hardware.

\paragraph{Action selection.}
At comparable planning horizons, GTP achieves action selection
times similar to TD-MPC2: GTP ($H=8$) requires 34.25\,ms per
action compared to 36.20\,ms for TD-MPC2 at the same horizon.
Despite the additional diffusion denoising and beam search
operations, GTP maintains competitive inference costs by
leveraging its generative prior to propose promising trajectories,
requiring substantially fewer candidates than Gaussian
sampling-based planners.

\paragraph{Training updates.}
The primary computational overhead of GTP arises from
generating elite trajectories through beam search at every
gradient update. In the original implementation, GTP ($H=8$)
requires 157.15\,ms per update, compared to 3.99\,ms for
TD-MPC2 at the same horizon. This additional cost reflects
the optimization of a multi-step diffusion prior using
planner-generated targets rather than conventional
single-step bootstrapping. While this increases training
computation, GTP achieves strong sample efficiency across
our benchmarks, making the trade-off particularly relevant
when environment interactions are expensive.

\vspace{-10pt}
\begin{table}[h]
\centering
\caption{
\textbf{Computational cost of GTP and representative baselines.}
Per-step action selection and training update times (ms)
under matched hardware and batch settings.
}
\label{tab:timing}
\resizebox{0.5\linewidth}{!}{
\begin{tabular}{lcc}
\toprule
Method & Action selection (ms) & Update (ms) \\
\midrule
SAC & 0.06 & 0.78 \\
DPMD & 0.73 & 2.18 \\
TD-MPC2 ($H=3$) & 17.19 & 2.44 \\
TD-MPC2 ($H=8$) & 36.20 & 3.99 \\
\midrule
GTP ($H=1$) & 1.02 & 8.54 \\
GTP ($H=4$) & 21.20 & 82.83 \\
GTP ($H=8$) & 34.25 & 157.15 \\
\bottomrule
\end{tabular}}
\end{table}

\begin{figure}[t]
    \centering
    \includegraphics[width=0.9\linewidth]{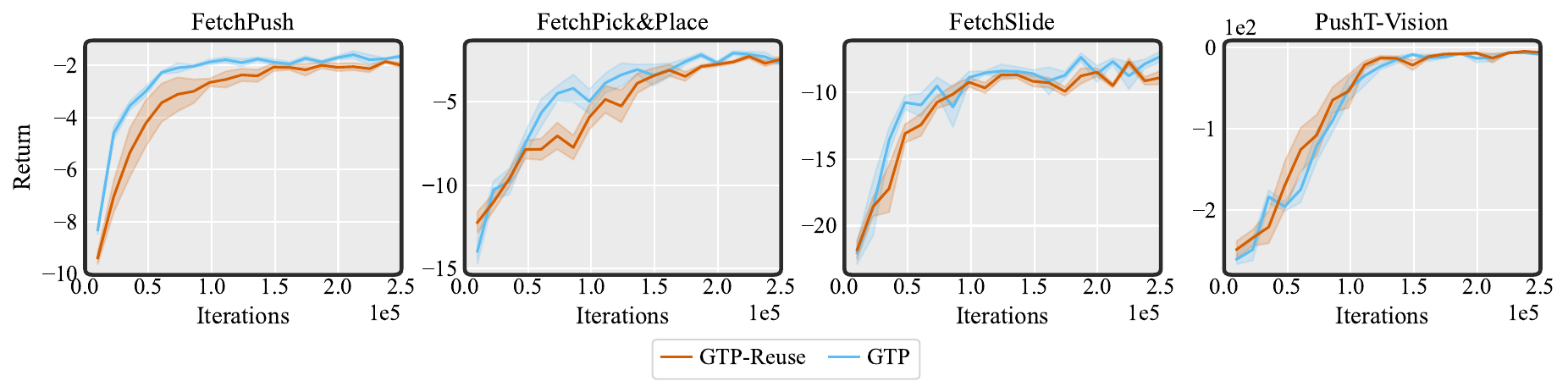}

    \vspace{0.5em}

    \resizebox{0.7\linewidth}{!}{
    \begin{tabular}{l|cccc}
        \toprule
        Algorithm & FetchPush & FetchPick\&Place
                  & FetchSlide & PushT-Vision \\
        \midrule
        GTP-Reuse & \textbf{100 $\pm$ 0} & \textbf{100 $\pm$ 0}
                     & 69 $\pm$ 1 & 98 $\pm$ 2 \\
        GTP & 100 $\pm$ 0 & 98 $\pm$ 1
               & \textbf{76 $\pm$ 3} & \textbf{99 $\pm$ 1} \\
        \bottomrule
    \end{tabular}}

    \caption{
        \textbf{Trajectory reuse reduces training overhead while
        preserving performance.}
        Reusing elite trajectories  maintains comparable learning curves (top) and final success rates (bottom).
    }
    \label{fig:bootstrap}
    \vspace{-10pt}
\end{figure}

\vspace{-5pt}
\section{Efficient Training via Trajectory Reuse}\label{app:bootstrap}
\vspace{-5pt}

A significant computational overhead in GTP arises from generating elite trajectories through beam search at every gradient update (cf. Alg.~\ref{alg:gtp_training}). However, generating fresh elites at every iteration is not necessary, as previously optimized trajectories can be reused across multiple updates.

To reduce this overhead, we introduce a trajectory reuse
variant that caches planner-generated elite trajectories and
reuses them for subsequent updates, following a similar approach
to~\citet{wang2025bootstrapped}. Specifically, we refresh the
cached elites once every 10 gradient updates instead of
regenerating them at every iteration. This reduces the frequency
of elite trajectory generation by a factor of 10 while preserving the same training objective.

\paragraph{Learning performance.}
Figure~\ref{fig:bootstrap} compares the learning curves and
final success rates of the original online training procedure
and its trajectory reuse variant. Trajectory reuse achieves
comparable performance across all four manipulation tasks,
suggesting that frequent regeneration of elite trajectories
is not essential for effective learning.

\paragraph{Computational savings.}
Trajectory reuse reduces the amortized update time of GTP
($H=8$) from 157.15\,ms to 19.12\,ms, corresponding to
an $8.2\times$ speedup. These results demonstrate that reusing
planner-generated trajectories substantially reduces
the computational cost of GTP without compromising its
learning performance.

\vspace{-5pt}
\section{Efficient Inference}\label{app:efficient_inference}
\vspace{-5pt}

Although GTP uses iterative beam search during training,
the learned generative prior can support efficient action
selection at deployment. We investigate two approaches
for reducing inference-time computation: removing explicit
planning and executing multiple actions between replanning
steps.

\paragraph{Planning-free inference.}
Since GTP trains its generative prior on planner-discovered
trajectories, the learned prior can generate high-quality
actions without explicit trajectory optimization. To evaluate
this property, we vary the number of candidate trajectories
at inference on FetchPick\&Place, including a single-candidate
variant without beam search.

As shown in Table~\ref{tab:inference_timing}, removing
inference-time planning reduces action selection time from
34.25\,ms to 8.61\,ms, a $4.0\times$ speedup, while
maintaining comparable success rates.
This demonstrates that planning during training can be
amortized into the generative prior, enabling efficient
deployment without repeated trajectory optimization.

\begin{table}[t]
\centering

\begin{minipage}[t]{0.56\linewidth}
\centering
\captionof{table}{
\textbf{GTP maintains performance with reduced inference-time planning.}
Action selection time and success rate on FetchPick\&Place
for different sampling budgets.
}
\label{tab:inference_timing}
\vspace{0.5em}
\resizebox{\linewidth}{!}{
\begin{tabular}{lcc}
\toprule
Samples & Action selection (ms) & Success rate (\%) \\
\midrule
32 & 34.25 & 98 \\
16 & 32.20 & 99 \\
8 & 32.26 & 99 \\
1 (no planning) & \textbf{8.61} & 99 \\
\bottomrule
\end{tabular}}
\end{minipage}
\hfill
\begin{minipage}[t]{0.41\linewidth}
\centering
\captionof{table}{
\textbf{GTP supports multi-step execution.}
Success rates (\%) on FetchPick\&Place when executing
multiple actions between replanning steps.
}
\label{tab:replanning}
\vspace{0.5em}
\resizebox{\linewidth}{!}{
\begin{tabular}{lcccc}
\toprule
Replanning interval & 1 & 2 & 4 & 8 \\
\midrule
Success rate (\%) & 98 & 98 & 96 & 92 \\
\bottomrule
\end{tabular}}
\end{minipage}

\end{table}

\paragraph{Multi-step execution.}
In our default implementation, GTP follows receding-horizon
MPC and replans after every environment step. However,
the generated action sequences can also be executed for
multiple steps before replanning, reducing the frequency
of expensive planning operations.

We evaluate this strategy on FetchPick\&Place using a
model trained with planning horizon $H=8$. As shown in
Table~\ref{tab:replanning}, executing multiple actions
before replanning preserves strong performance.

These results suggest that GTP's generated trajectories
remain effective even when executed partially open-loop.
While less frequent replanning reduces computational
overhead, replanning at every step remains preferable
during online training, when inaccurate dynamics and
rapidly changing policies can amplify prediction errors.
At deployment, however, multi-step execution provides
an additional way to reduce planning frequency with
limited performance degradation.

\section{Convergence Analysis of the Generative Prior}
\label{app:theory}
In this section, we provide a theoretical characterization of the generative prior update introduced in Section ~\ref{sec:theory}.
Building on the connection between reweighted diffusion training
and policy improvement~\citep{ma2025efficient}, we show that
the idealized GTP update corresponds to a KL-regularized
improvement step in trajectory space. We then prove
Theorem~\ref{thm:prior_convergence}, establishing monotonic
improvement and asymptotic convergence, and discuss the
implications for GTP's practical components.
\subsection{Idealized Prior Update}
Consider a fixed state $s$ and let $\mathcal{A}^H$ denote the
space of action sequences. We suppress the dependence on $s$
and write $J(\mathbf{a})=J_H(s,\mathbf{a})$. We assume that
$J$ is bounded and fixed throughout the analysis, corresponding
to holding the learned dynamics, reward, and value models fixed.
Let $p_n$ denote the generative prior after $n$ updates.
In GTP, candidate trajectories are evaluated under $J$
and assigned weights proportional to
$\exp(J(\mathbf{a})/\lambda)$, where $\lambda>0$ is
the temperature parameter.
Consider the idealized setting where infinitely many
trajectories are sampled from $p_n$, and the diffusion
model has sufficient capacity and is optimized exactly.
Under these conditions, the weighted denoising objective
recovers the exponentially tilted distribution
\begin{equation}
    p_{n+1}(\mathbf{a})
    =
    \frac{
        p_n(\mathbf{a})
        \exp(J(\mathbf{a})/\lambda)
    }{
        \mathbb{E}_{p_n}[
            \exp(J(\mathbf{a})/\lambda)
        ]
    }.
    \label{eq:theory_update}
\end{equation}
This follows from the reweighted score-matching interpretation
of diffusion policy optimization~\citep{ma2025efficient},
extended from individual actions to action sequences.
The update admits the variational interpretation
\begin{equation}
    p_{n+1}
    =
    \arg\max_{q\ll p_n}
    \left\{
        \mathbb{E}_{q}[J]
        -
        \lambda D_{\mathrm{KL}}(q\|p_n)
    \right\},
    \label{eq:theory_variational}
\end{equation}
where $q\ll p_n$ denotes absolute continuity.
This is a KL-regularized improvement step, closely related
to policy mirror descent~\citep{tomar2020mirror}, applied
to trajectory distributions rather than single step policies.

\subsection{Convergence Analysis}
We now prove Theorem~\ref{thm:prior_convergence}.
The result follows from the variational characterization
of exponential tilting and its concentration properties.
Throughout, $J$ is bounded and $\lambda>0$ is fixed. Let
$J^\star=\sup_{\mathbf{a}}J(\mathbf{a})$ and, for $\delta>0$,
define the set of $\delta$-optimal trajectories
\begin{equation}
    U_\delta
    =
    \left\{
        \mathbf{a}:
        J(\mathbf{a})\geq J^\star-\delta
    \right\}.
\end{equation}

\begin{proof}[Proof of Theorem~\ref{thm:prior_convergence}]
\textit{Part (i): Monotonic improvement.}
By Eq.~\eqref{eq:theory_variational}, $p_{n+1}$ maximizes
$\mathbb{E}_{q}[J]-\lambda D_{\mathrm{KL}}(q\|p_n)$ over
$q\ll p_n$. Since $q=p_n$ is feasible and
$D_{\mathrm{KL}}(p_n\|p_n)=0$,
\begin{equation}
    \mathbb{E}_{p_{n+1}}[J]
    -
    \lambda D_{\mathrm{KL}}(p_{n+1}\|p_n)
    \geq
    \mathbb{E}_{p_n}[J].
\end{equation}
Rearranging and using $D_{\mathrm{KL}}\geq 0$,
\begin{equation}
    \mathbb{E}_{p_{n+1}}[J]
    \geq
    \mathbb{E}_{p_n}[J]
    +
    \lambda D_{\mathrm{KL}}(p_{n+1}\|p_n)
    \geq
    \mathbb{E}_{p_n}[J].
\end{equation}

\textit{Part (ii): Convergence.}
Assume $p_0(U_\delta)>0$ for every $\delta>0$.

\emph{Step 1 (closed form).} We claim that for every $n\geq 0$,
\begin{equation}
    p_n(\mathbf{a})
    =
    \frac{
        p_0(\mathbf{a})
        \exp(nJ(\mathbf{a})/\lambda)
    }{
        \mathbb{E}_{p_0}[
            \exp(nJ/\lambda)
        ]
    }.
    \label{eq:theory_closed_form}
\end{equation}
The case $n=0$ is immediate. If it holds for $n$, then by
Eq.~\eqref{eq:theory_update},
$p_{n+1}\propto p_n e^{J/\lambda}\propto p_0 e^{(n+1)J/\lambda}$,
and the normalizer is finite and positive because $J$ is bounded.

\emph{Step 2 (optimality gap).} Fix $n\geq1$ and $\delta>0$.
Eq.~\eqref{eq:theory_closed_form} is the exponential tilt of
$p_0$ at temperature $\lambda/n$, so the variational
characterization~\eqref{eq:theory_variational}, applied with
$p_0$ in place of $p_n$ and $\lambda/n$ in place of $\lambda$,
shows that $p_n$ maximizes
\begin{equation}
    \mathcal{L}_n(q)
    =
    \mathbb{E}_{q}[J]
    -
    \frac{\lambda}{n}
    D_{\mathrm{KL}}(q\|p_0)
\end{equation}
over $q\ll p_0$. Let $q_\delta=p_0(\cdot\mid U_\delta)$, which is
well defined since $p_0(U_\delta)>0$. Its density with respect to
$p_0$ is $\mathbf{1}_{U_\delta}/p_0(U_\delta)$, and $J\geq J^\star-\delta$
on $U_\delta$, so
\begin{equation}
    \mathbb{E}_{q_\delta}[J]
    \geq J^\star-\delta,
    \qquad
    D_{\mathrm{KL}}(q_\delta\|p_0)
    =
    \log\frac{1}{p_0(U_\delta)}.
\end{equation}
Combining $D_{\mathrm{KL}}(p_n\|p_0)\geq0$ with the optimality
of $p_n$,
\begin{equation}
    \mathbb{E}_{p_n}[J]
    \;\geq\;
    \mathcal{L}_n(p_n)
    \;\geq\;
    \mathcal{L}_n(q_\delta)
    \;\geq\;
    J^\star-\delta
    -\frac{\lambda}{n}\log\frac{1}{p_0(U_\delta)},
\end{equation}
that is,
\begin{equation}
    J^\star-\mathbb{E}_{p_n}[J]
    \leq
    \delta
    +
    \frac{\lambda}{n}
    \log\frac{1}{p_0(U_\delta)}.
    \label{eq:theory_gap}
\end{equation}
Letting $n\to\infty$ gives
$\limsup_{n\to\infty}\big(J^\star-\mathbb{E}_{p_n}[J]\big)\leq\delta$.
Since $\delta>0$ is arbitrary and $\mathbb{E}_{p_n}[J]\leq J^\star$
for all $n$,
\begin{equation}
    \lim_{n\to\infty}
    \mathbb{E}_{p_n}[J]
    =
    J^\star.
\end{equation}

\emph{Step 3 (concentration).} Fix $\delta>0$. On $U_\delta^c$ we have
$J<J^\star-\delta$, and on $U_{\delta/2}$ we have $J\geq J^\star-\delta/2$.
Bounding the numerator of Eq.~\eqref{eq:theory_closed_form} from above
on $U_\delta^c$, and the normalizer from below by restricting it to
$U_{\delta/2}$,
\begin{equation}
    p_n(U_\delta^c)
    =
    \frac{\mathbb{E}_{p_0}\big[e^{nJ/\lambda}\mathbf{1}_{U_\delta^c}\big]}
         {\mathbb{E}_{p_0}\big[e^{nJ/\lambda}\big]}
    \leq
    \frac{e^{n(J^\star-\delta)/\lambda}}
         {e^{n(J^\star-\delta/2)/\lambda}\,p_0(U_{\delta/2})}
    =
    \frac{
        \exp(-n\delta/(2\lambda))
    }{
        p_0(U_{\delta/2})
    }.
    \label{eq:theory_concentration}
\end{equation}
Since $p_0(U_{\delta/2})>0$, the right-hand side converges to zero, so
\begin{equation}
    \lim_{n\to\infty}
    p_n(U_\delta)=1,
    \qquad \forall\,\delta>0.
\end{equation}
\end{proof}

\subsection{Relation to GTP}
The preceding analysis considers an idealized
self-imitation process, whereas GTP incorporates
several additional components.

\paragraph{Beam search.}
The convergence theorem assumes that trajectories
are sampled directly from the current prior.
In practice, GTP instead generates improved
trajectory proposals through beam search.
Let $e$ denote the distribution of trajectories
produced by the planner. Under exact distillation,
the updated prior satisfies
\begin{equation}
    p^+(\mathbf{a})
    \propto
    e(\mathbf{a})
    \exp(J(\mathbf{a})/\lambda).
\end{equation}
By the same variational argument,
\begin{equation}
    \mathbb{E}_{p^+}[J]
    \geq
    \mathbb{E}_{e}[J]
    +
    \lambda D_{\mathrm{KL}}(p^+\|e).
\end{equation}
Thus, the distilled prior achieves an expected
planning objective at least as high as that
of the proposal distribution. If beam search
produces proposals satisfying
$\mathbb{E}_{e}[J]\geq\mathbb{E}_{p_n}[J]$,
the resulting update is also monotonic.
This provides an interpretation of beam search
as an improvement mechanism whose discoveries
are distilled into the generative prior.

\paragraph{Temperature adaptation.}
GTP adapts the temperature to maintain a
target effective sample size (ESS) over
planner-generated trajectories. In the
population limit, the normalized ESS is
related to the divergence between the
updated prior and its proposal distribution:
\begin{equation}
    \mathrm{ESS}(\lambda)
    =
    \frac{1}{
        1+\chi^2(p^+\|e)
    }.
\end{equation}
Consequently, targeting a fixed ESS
corresponds to controlling the size of
the prior update in terms of
$\chi^2$ divergence. This prevents
excessively concentrated importance weights
while allowing the prior to progressively
shift toward higher-return trajectories.

\paragraph{Langevin augmentation.}
A limitation of the idealized update is
that it cannot introduce probability mass
outside the support of the current prior:
\begin{equation}
    \operatorname{supp}(p_{n+1})
    \subseteq
    \operatorname{supp}(p_n).
\end{equation}
Consequently, if the prior loses support
around optimal trajectories, subsequent
updates cannot recover these regions,
violating the positivity assumption
required for convergence.
GTP addresses this issue by augmenting
planner-generated trajectories with
Langevin perturbations
(App.~\ref{app:langevin}). The stochastic
component introduces additional support,
while the objective gradient directs
perturbations toward high-scoring regions.
This helps mitigate mode collapse and
maintains opportunities to discover
improved solutions.

\paragraph{Scope and limitations.}
Our theoretical results characterize the idealized prior update under a fixed planning objective, infinitely many samples, and exact diffusion model fitting. They establish monotonic improvement and asymptotic convergence in this setting, but do not provide convergence guarantees for finite-budget beam search, imperfect function approximation, or jointly learned dynamics and value models. Nevertheless, the analysis provides a theoretical foundation for iterative trajectory improvement and prior distillation, while clarifying the roles of GTP's key components in promoting effective learning.

\newpage
\section*{NeurIPS Paper Checklist}

\begin{enumerate}

\item {\bf Claims}
    \item[] Question: Do the main claims made in the abstract and introduction accurately reflect the paper's contributions and scope?
    \item[] Answer: \answerYes{} % Replace by \answerYes{}, \answerNo{}, or \answerNA{}.
    \item[] Justification:The abstract and introduction provide an accurate and consistent summary of the paper’s contributions, including the proposed method, its formulation, and the empirical improvements demonstrated in the experiments.
    \item[] Guidelines: 
    \begin{itemize}
        \item The answer \answerNA{} means that the abstract and introduction do not include the claims made in the paper.
        \item The abstract and/or introduction should clearly state the claims made, including the contributions made in the paper and important assumptions and limitations. A \answerNo{} or \answerNA{} answer to this question will not be perceived well by the reviewers. 
        \item The claims made should match theoretical and experimental results, and reflect how much the results can be expected to generalize to other settings. 
        \item It is fine to include aspirational goals as motivation as long as it is clear that these goals are not attained by the paper. 
    \end{itemize}

\item {\bf Limitations}
    \item[] Question: Does the paper discuss the limitations of the work performed by the authors?
    \item[] Answer: \answerYes{} % Replace by \answerYes{}, \answerNo{}, or \answerNA{}.
    \item[] Justification: The paper discusses key limitations and assumptions underlying the proposed approach and outlines directions for future work to address them.
    \item[] Guidelines:
    \begin{itemize}
        \item The answer \answerNA{} means that the paper has no limitation while the answer \answerNo{} means that the paper has limitations, but those are not discussed in the paper. 
        \item The authors are encouraged to create a separate ``Limitations'' section in their paper.
        \item The paper should point out any strong assumptions and how robust the results are to violations of these assumptions (e.g., independence assumptions, noiseless settings, model well-specification, asymptotic approximations only holding locally). The authors should reflect on how these assumptions might be violated in practice and what the implications would be.
        \item The authors should reflect on the scope of the claims made, e.g., if the approach was only tested on a few datasets or with a few runs. In general, empirical results often depend on implicit assumptions, which should be articulated.
        \item The authors should reflect on the factors that influence the performance of the approach. For example, a facial recognition algorithm may perform poorly when image resolution is low or images are taken in low lighting. Or a speech-to-text system might not be used reliably to provide closed captions for online lectures because it fails to handle technical jargon.
        \item The authors should discuss the computational efficiency of the proposed algorithms and how they scale with dataset size.
        \item If applicable, the authors should discuss possible limitations of their approach to address problems of privacy and fairness.
        \item While the authors might fear that complete honesty about limitations might be used by reviewers as grounds for rejection, a worse outcome might be that reviewers discover limitations that aren't acknowledged in the paper. The authors should use their best judgment and recognize that individual actions in favor of transparency play an important role in developing norms that preserve the integrity of the community. Reviewers will be specifically instructed to not penalize honesty concerning limitations.
    \end{itemize}

\item {\bf Theory assumptions and proofs}
    \item[] Question: For each theoretical result, does the paper provide the full set of assumptions and a complete (and correct) proof?
    \item[] Answer: \answerYes{} % Replace by \answerYes{}, \answerNo{}, or \answerNA{}.
    \item[] Justification: The paper presents a theoretical convergence result in Section~\ref{sec:theory}, with assumptions, a formal proof, and a discussion of its applicability and limitations provided in Appendix~\ref{app:theory}.
    \item[] Guidelines:
    \begin{itemize}
        \item The answer \answerNA{} means that the paper does not include theoretical results. 
        \item All the theorems, formulas, and proofs in the paper should be numbered and cross-referenced.
        \item All assumptions should be clearly stated or referenced in the statement of any theorems.
        \item The proofs can either appear in the main paper or the supplemental material, but if they appear in the supplemental material, the authors are encouraged to provide a short proof sketch to provide intuition. 
        \item Inversely, any informal proof provided in the core of the paper should be complemented by formal proofs provided in appendix or supplemental material.
        \item Theorems and Lemmas that the proof relies upon should be properly referenced. 
    \end{itemize}

    \item {\bf Experimental result reproducibility}
    \item[] Question: Does the paper fully disclose all the information needed to reproduce the main experimental results of the paper to the extent that it affects the main claims and/or conclusions of the paper (regardless of whether the code and data are provided or not)?
    \item[] Answer: \answerYes{} % Replace by \answerYes{}, \answerNo{}, or \answerNA{}.
    \item[] Justification: The paper provides sufficient detail on the experimental setup, including model architecture, training procedures, hyperparameters, and evaluation protocols, to enable reproduction of the main results and support the paper’s claims and conclusions.
    \item[] Guidelines: 
    \begin{itemize}
        \item The answer \answerNA{} means that the paper does not include experiments.
        \item If the paper includes experiments, a \answerNo{} answer to this question will not be perceived well by the reviewers: Making the paper reproducible is important, regardless of whether the code and data are provided or not.
        \item If the contribution is a dataset and\slash or model, the authors should describe the steps taken to make their results reproducible or verifiable. 
        \item Depending on the contribution, reproducibility can be accomplished in various ways. For example, if the contribution is a novel architecture, describing the architecture fully might suffice, or if the contribution is a specific model and empirical evaluation, it may be necessary to either make it possible for others to replicate the model with the same dataset, or provide access to the model. In general. releasing code and data is often one good way to accomplish this, but reproducibility can also be provided via detailed instructions for how to replicate the results, access to a hosted model (e.g., in the case of a large language model), releasing of a model checkpoint, or other means that are appropriate to the research performed.
        \item While NeurIPS does not require releasing code, the conference does require all submissions to provide some reasonable avenue for reproducibility, which may depend on the nature of the contribution. For example
        \begin{enumerate}
            \item If the contribution is primarily a new algorithm, the paper should make it clear how to reproduce that algorithm.
            \item If the contribution is primarily a new model architecture, the paper should describe the architecture clearly and fully.
            \item If the contribution is a new model (e.g., a large language model), then there should either be a way to access this model for reproducing the results or a way to reproduce the model (e.g., with an open-source dataset or instructions for how to construct the dataset).
            \item We recognize that reproducibility may be tricky in some cases, in which case authors are welcome to describe the particular way they provide for reproducibility. In the case of closed-source models, it may be that access to the model is limited in some way (e.g., to registered users), but it should be possible for other researchers to have some path to reproducing or verifying the results.
        \end{enumerate}
    \end{itemize}

\item {\bf Open access to data and code}
    \item[] Question: Does the paper provide open access to the data and code, with sufficient instructions to faithfully reproduce the main experimental results, as described in supplemental material?
    \item[] Answer: \answerYes{}{} % Replace by \answerYes{}, \answerNo{}, or \answerNA{}.
    \item[] Justification: We provide access to the code, along with documentation and instructions to reproduce the main experimental results
    \item[] Guidelines:
    \begin{itemize}
        \item The answer \answerNA{} means that paper does not include experiments requiring code.
        \item Please see the NeurIPS code and data submission guidelines (\url{https://neurips.cc/public/guides/CodeSubmissionPolicy}) for more details.
        \item While we encourage the release of code and data, we understand that this might not be possible, so \answerNo{} is an acceptable answer. Papers cannot be rejected simply for not including code, unless this is central to the contribution (e.g., for a new open-source benchmark).
        \item The instructions should contain the exact command and environment needed to run to reproduce the results. See the NeurIPS code and data submission guidelines (\url{https://neurips.cc/public/guides/CodeSubmissionPolicy}) for more details.
        \item The authors should provide instructions on data access and preparation, including how to access the raw data, preprocessed data, intermediate data, and generated data, etc.
        \item The authors should provide scripts to reproduce all experimental results for the new proposed method and baselines. If only a subset of experiments are reproducible, they should state which ones are omitted from the script and why.
        \item At submission time, to preserve anonymity, the authors should release anonymized versions (if applicable).
        \item Providing as much information as possible in supplemental material (appended to the paper) is recommended, but including URLs to data and code is permitted.
    \end{itemize}

\item {\bf Experimental setting/details}
    \item[] Question: Does the paper specify all the training and test details (e.g., data splits, hyperparameters, how they were chosen, type of optimizer) necessary to understand the results?
    \item[] Answer: \answerYes{} % Replace by \answerYes{}, \answerNo{}, or \answerNA{}.
    \item[] Justification: The paper specifies all relevant training and evaluation details, hyperparameters and their selection, and other implementation specifics necessary to understand and interpret the results.
    \item[] Guidelines:
    \begin{itemize}
        \item The answer \answerNA{} means that the paper does not include experiments.
        \item The experimental setting should be presented in the core of the paper to a level of detail that is necessary to appreciate the results and make sense of them.
        \item The full details can be provided either with the code, in appendix, or as supplemental material.
    \end{itemize}

\item {\bf Experiment statistical significance}
    \item[] Question: Does the paper report error bars suitably and correctly defined or other appropriate information about the statistical significance of the experiments?
    \item[] Answer: \answerYes{} % Replace by \answerYes{}, \answerNo{}, or \answerNA{}.
    \item[] Justification: All results are averaged over 5 independent trials, and we report standard errors to quantify variability and support the statistical reliability of the findings.
    \item[] Guidelines:
    \begin{itemize}
        \item The answer \answerNA{} means that the paper does not include experiments.
        \item The authors should answer \answerYes{} if the results are accompanied by error bars, confidence intervals, or statistical significance tests, at least for the experiments that support the main claims of the paper.
        \item The factors of variability that the error bars are capturing should be clearly stated (for example, train/test split, initialization, random drawing of some parameter, or overall run with given experimental conditions).
        \item The method for calculating the error bars should be explained (closed form formula, call to a library function, bootstrap, etc.)
        \item The assumptions made should be given (e.g., Normally distributed errors).
        \item It should be clear whether the error bar is the standard deviation or the standard error of the mean.
        \item It is OK to report 1-sigma error bars, but one should state it. The authors should preferably report a 2-sigma error bar than state that they have a 96\% CI, if the hypothesis of Normality of errors is not verified.
        \item For asymmetric distributions, the authors should be careful not to show in tables or figures symmetric error bars that would yield results that are out of range (e.g., negative error rates).
        \item If error bars are reported in tables or plots, the authors should explain in the text how they were calculated and reference the corresponding figures or tables in the text.
    \end{itemize}

\item {\bf Experiments compute resources}
    \item[] Question: For each experiment, does the paper provide sufficient information on the computer resources (type of compute workers, memory, time of execution) needed to reproduce the experiments?
    \item[] Answer: \answerYes{} % Replace by \answerYes{}, \answerNo{}, or \answerNA{}.
    \item[] Justification: We provide the details in Appendix~\ref{app:timing}.
    \item[] Guidelines:
    \begin{itemize}
        \item The answer \answerNA{} means that the paper does not include experiments.
        \item The paper should indicate the type of compute workers CPU or GPU, internal cluster, or cloud provider, including relevant memory and storage.
        \item The paper should provide the amount of compute required for each of the individual experimental runs as well as estimate the total compute. 
        \item The paper should disclose whether the full research project required more compute than the experiments reported in the paper (e.g., preliminary or failed experiments that didn't make it into the paper). 
    \end{itemize}
    
\item {\bf Code of ethics}
    \item[] Question: Does the research conducted in the paper conform, in every respect, with the NeurIPS Code of Ethics \url{https://neurips.cc/public/EthicsGuidelines}?
    \item[] Answer: \answerYes{} % Replace by \answerYes{}, \answerNo{}, or \answerNA{}.
    \item[] Justification: The research complies with the NeurIPS Code of Ethics.
    \item[] Guidelines:
    \begin{itemize}
        \item The answer \answerNA{} means that the authors have not reviewed the NeurIPS Code of Ethics.
        \item If the authors answer \answerNo, they should explain the special circumstances that require a deviation from the Code of Ethics.
        \item The authors should make sure to preserve anonymity (e.g., if there is a special consideration due to laws or regulations in their jurisdiction).
    \end{itemize}

\item {\bf Broader impacts}
    \item[] Question: Does the paper discuss both potential positive societal impacts and negative societal impacts of the work performed?
    \item[] Answer: \answerNA{} % Replace by \answerYes{}, \answerNo{}, or \answerNA{}.
    \item[] Justification: The paper focuses on methodological contributions and does not present direct societal impacts.
    \item[] Guidelines:
    \begin{itemize}
        \item The answer \answerNA{} means that there is no societal impact of the work performed.
        \item If the authors answer \answerNA{} or \answerNo, they should explain why their work has no societal impact or why the paper does not address societal impact.
        \item Examples of negative societal impacts include potential malicious or unintended uses (e.g., disinformation, generating fake profiles, surveillance), fairness considerations (e.g., deployment of technologies that could make decisions that unfairly impact specific groups), privacy considerations, and security considerations.
        \item The conference expects that many papers will be foundational research and not tied to particular applications, let alone deployments. However, if there is a direct path to any negative applications, the authors should point it out. For example, it is legitimate to point out that an improvement in the quality of generative models could be used to generate Deepfakes for disinformation. On the other hand, it is not needed to point out that a generic algorithm for optimizing neural networks could enable people to train models that generate Deepfakes faster.
        \item The authors should consider possible harms that could arise when the technology is being used as intended and functioning correctly, harms that could arise when the technology is being used as intended but gives incorrect results, and harms following from (intentional or unintentional) misuse of the technology.
        \item If there are negative societal impacts, the authors could also discuss possible mitigation strategies (e.g., gated release of models, providing defenses in addition to attacks, mechanisms for monitoring misuse, mechanisms to monitor how a system learns from feedback over time, improving the efficiency and accessibility of ML).
    \end{itemize}
    
\item {\bf Safeguards}
    \item[] Question: Does the paper describe safeguards that have been put in place for responsible release of data or models that have a high risk for misuse (e.g., pre-trained language models, image generators, or scraped datasets)?
    \item[] Answer: \answerNA{} % Replace by \answerYes{}, \answerNo{}, or \answerNA{}.
    \item[] Justification: The paper does not introduce or release artifacts that pose a high risk of misuse.
    \item[] Guidelines:
    \begin{itemize}
        \item The answer \answerNA{} means that the paper poses no such risks.
        \item Released models that have a high risk for misuse or dual-use should be released with necessary safeguards to allow for controlled use of the model, for example by requiring that users adhere to usage guidelines or restrictions to access the model or implementing safety filters. 
        \item Datasets that have been scraped from the Internet could pose safety risks. The authors should describe how they avoided releasing unsafe images.
        \item We recognize that providing effective safeguards is challenging, and many papers do not require this, but we encourage authors to take this into account and make a best faith effort.
    \end{itemize}

\item {\bf Licenses for existing assets}
    \item[] Question: Are the creators or original owners of assets (e.g., code, data, models), used in the paper, properly credited and are the license and terms of use explicitly mentioned and properly respected?
    \item[] Answer: \answerYes{} % Replace by \answerYes{}, \answerNo{}, or \answerNA{}.
    \item[] Justification: All external assets used in the paper are properly credited, and their licenses and terms of use are explicitly acknowledged and respected.
    \item[] Guidelines:
    \begin{itemize}
        \item The answer \answerNA{} means that the paper does not use existing assets.
        \item The authors should cite the original paper that produced the code package or dataset.
        \item The authors should state which version of the asset is used and, if possible, include a URL.
        \item The name of the license (e.g., CC-BY 4.0) should be included for each asset.
        \item For scraped data from a particular source (e.g., website), the copyright and terms of service of that source should be provided.
        \item If assets are released, the license, copyright information, and terms of use in the package should be provided. For popular datasets, \url{paperswithcode.com/datasets} has curated licenses for some datasets. Their licensing guide can help determine the license of a dataset.
        \item For existing datasets that are re-packaged, both the original license and the license of the derived asset (if it has changed) should be provided.
        \item If this information is not available online, the authors are encouraged to reach out to the asset's creators.
    \end{itemize}

\item {\bf New assets}
    \item[] Question: Are new assets introduced in the paper well documented and is the documentation provided alongside the assets?
    \item[] Answer: \answerYes{} % Replace by \answerYes{}, \answerNo{}, or \answerNA{}.
    \item[] Justification: Newly introduced assets are documented and accompanied by instructions to facilitate their use and reproduction of the results.
    \item[] Guidelines:
    \begin{itemize}
        \item The answer \answerNA{} means that the paper does not release new assets.
        \item Researchers should communicate the details of the dataset\slash code\slash model as part of their submissions via structured templates. This includes details about training, license, limitations, etc. 
        \item The paper should discuss whether and how consent was obtained from people whose asset is used.
        \item At submission time, remember to anonymize your assets (if applicable). You can either create an anonymized URL or include an anonymized zip file.
    \end{itemize}

\item {\bf Crowdsourcing and research with human subjects}
    \item[] Question: For crowdsourcing experiments and research with human subjects, does the paper include the full text of instructions given to participants and screenshots, if applicable, as well as details about compensation (if any)? 
    \item[] Answer: \answerNA{} % Replace by \answerYes{}, \answerNo{}, or \answerNA{}.
    \item[] Justification: The paper does not involve crowdsourcing or human subjects.
    \item[] Guidelines:
    \begin{itemize}
        \item The answer \answerNA{} means that the paper does not involve crowdsourcing nor research with human subjects.
        \item Including this information in the supplemental material is fine, but if the main contribution of the paper involves human subjects, then as much detail as possible should be included in the main paper. 
        \item According to the NeurIPS Code of Ethics, workers involved in data collection, curation, or other labor should be paid at least the minimum wage in the country of the data collector. 
    \end{itemize}

\item {\bf Institutional review board (IRB) approvals or equivalent for research with human subjects}
    \item[] Question: Does the paper describe potential risks incurred by study participants, whether such risks were disclosed to the subjects, and whether Institutional Review Board (IRB) approvals (or an equivalent approval/review based on the requirements of your country or institution) were obtained?
    \item[] Answer: \answerNA{} % Replace by \answerYes{}, \answerNo{}, or \answerNA{}.
    \item[] Justification: The paper does not involve human subjects or study participants.
    \item[] Guidelines:
    \begin{itemize}
        \item The answer \answerNA{} means that the paper does not involve crowdsourcing nor research with human subjects.
        \item Depending on the country in which research is conducted, IRB approval (or equivalent) may be required for any human subjects research. If you obtained IRB approval, you should clearly state this in the paper. 
        \item We recognize that the procedures for this may vary significantly between institutions and locations, and we expect authors to adhere to the NeurIPS Code of Ethics and the guidelines for their institution. 
        \item For initial submissions, do not include any information that would break anonymity (if applicable), such as the institution conducting the review.
    \end{itemize}

\item {\bf Declaration of LLM usage}
    \item[] Question: Does the paper describe the usage of LLMs if it is an important, original, or non-standard component of the core methods in this research? Note that if the LLM is used only for writing, editing, or formatting purposes and does \emph{not} impact the core methodology, scientific rigor, or originality of the research, declaration is not required.
    %this research? 
    \item[] Answer: \answerNA{} % Replace by \answerYes{}, \answerNo{}, or \answerNA{}.
    \item[] Justification: The paper does not use LLMs as part of the core methodology.
    \item[] Guidelines:
    \begin{itemize}
        \item The answer \answerNA{} means that the core method development in this research does not involve LLMs as any important, original, or non-standard components.
        \item Please refer to our LLM policy in the NeurIPS handbook for what should or should not be described.
    \end{itemize}

\end{enumerate}

\end{document}